\documentclass[10pt,journal,compsoc]{IEEEtran}

\usepackage[nocompress]{cite}
\usepackage{graphicx}

\usepackage[linesnumbered,boxed,ruled,commentsnumbered]{algorithm2e}
\usepackage{subfigure}
\usepackage{url}
\usepackage{multirow}
\usepackage{arydshln}
\usepackage{amsfonts}
\usepackage{amsmath}
\usepackage{amsthm}
\newcommand{\tabincell}[2]{\begin{tabular}{@{}#1@{}}#2\end{tabular}}
\newtheorem{theorem}{Theorem}
\theoremstyle{definition}
\newtheorem{definition}{Definition}

\usepackage{xcolor}


\hyphenation{op-tical net-works semi-conduc-tor}

\begin{document}

\title{Interpretable and Efficient Heterogeneous Graph Convolutional Network}

\author{Yaming~Yang,
        Ziyu Guan$^{*}$\thanks{* Corresponding author},
        Jianxin~Li,
        Wei~Zhao,
        Jiangtao~Cui,
        Quan~Wang%
\IEEEcompsocitemizethanks{
\IEEEcompsocthanksitem Y. Yang, Z. Guan, and W. Zhao are with the State Key Laboratory of Integrated Services Networks, School of Computer Science and Technology, Xidian University, Xi'an, China 710071.
E-mail: \{ymyang@stu., zyguan@, ywzhao@mail.\}xidian.edu.cn
\IEEEcompsocthanksitem J. Cui and Q. Wang are with the School of Computer Science and Technology, Xidian University, Xi'an, China 710071.
E-mail: \{cuijt@, qwang@\}xidian.edu.cn
\IEEEcompsocthanksitem J. Li is with the School of Information Technology, Deakin University, Australia.
E-mail: jianxin.li@deakin.edu.au
}%
}

\markboth{Journal of \LaTeX\ Class Files, ~Vol.~XX, No.~XX, June~2020}%
{Yang \MakeLowercase{\textit{et al.}}: Interpretable and Efficient Heterogeneous Graph Convolutional Network}

\IEEEtitleabstractindextext{%

\begin{abstract}
Graph Convolutional Network (GCN) has achieved extraordinary success in learning effective task-specific representations of nodes in graphs. However, regarding Heterogeneous Information Network (HIN), existing HIN-oriented GCN methods still suffer from two deficiencies: (1) they cannot flexibly explore all possible meta-paths and extract the most useful ones for a target object, which hinders both effectiveness and interpretability; (2) they often need to generate intermediate meta-path based dense graphs, which leads to high computational complexity. To address the above issues, we propose an interpretable and efficient Heterogeneous Graph Convolutional Network (ie-HGCN) to learn the representations of objects in HINs. It is designed as a hierarchical aggregation architecture, i.e., object-level aggregation first, followed by type-level aggregation. The novel architecture can automatically extract useful meta-paths for each object from all possible meta-paths (within a length limit), which brings good model interpretability. It can also reduce the computational cost by avoiding intermediate HIN transformation and neighborhood attention. We provide theoretical analysis about the proposed ie-HGCN in terms of evaluating the usefulness of all possible meta-paths, its connection to the spectral graph convolution on HINs, and its quasi-linear time complexity. Extensive experiments on three real network datasets demonstrate the superiority of ie-HGCN over the state-of-the-art methods.
\end{abstract}

\begin{IEEEkeywords}
Heterogeneous Information Network, Graph Convolutional Network, Network Representation Learning
\end{IEEEkeywords}}

\maketitle

\IEEEdisplaynontitleabstractindextext

\IEEEpeerreviewmaketitle

\IEEEraisesectionheading{\section{Introduction}\label{sec:intro}}

\IEEEPARstart{I}{n} the real world, a graph usually contains multiple types of objects (nodes) and links (edges), which is called a heterogeneous graph, or Heterogeneous Information Network (HIN) \cite{pathsim}. Properly learning representations of objects in an HIN can boost a variety of tasks such as object classification and link prediction \cite{yang-hin-survey}. There are two main challenges for this problem: 
(1) \textit{Effectiveness}. The rich structural proximities captured by meta-paths (paths with object types and link types as
nodes and edges) in an HIN are shown to be important and beneficial to many network mining tasks \cite{yang-hin-survey}. A desired HIN representation learning method should be able to exploit as many meta-paths as possible. Furthermore, not all the proximities described by meta-paths are equally important for a specific task. For example, suppose we have a task to classify the research ares of papers in DBLP. Paper $p_{1}$ is published in an interdisciplinary conference such as WWW, and is connected to term ``Web Search''. Paper $p_{2}$ is published in AAAI, and is connected to term ``Graph Algorithm''. Obviously, the connected term of $p_{1}$ is more helpful for classifying $p_{1}$ as ``information retrieval'', while the conference where $p_{2}$ is published is more helpful to classify $p_{2}$ as ``artificial intelligence''. Therefore, an effective representation learning method should be able to discover and leverage ``personalized'' useful meta-paths for an object.
(2) \textit{Efficiency}. Real world HINs often contain a large number of objects and links. Hence, computational efficiency is also an important requirement for representation learning methods, especially considering that modern deep learning methods are usually trained on GPUs with limited memory. 



Existing HIN representation learning methods mainly fall into two categories: graph embedding methods, and Graph Convolutional Network (GCN) based methods.
The HIN embedding methods learn object representations in a non-parametric way and by preserving some specific structural properties.
Among them, some methods \cite{heer,pte,eoe} only preserve first-order proximity conveyed by relations. Although the other methods \cite{metapath2vec,shne,esim,herec,mcrec,hine,hin2vec} preserve high-order structural proximities conveyed by meta-paths, they either require users to specify meta-paths \cite{metapath2vec,shne,esim,herec,mcrec} or cannot learn the importance of meta-paths for a task \cite{hine,hin2vec}.
To summarize, (1) with unsupervised structure-preserving training, the learned embeddings may not lead to optimal performance for a specific task; (2) none of these methods can automatically explore useful meta-paths from all possible meta-paths for specific tasks.  

Recently, GCN has been successfully applied to many graph analytical tasks such as node classification. Different from graph embedding, GCN encodes structural properties by convolution and uses task-specific objectives for training. Several recent works try to extend GCN to HINs. However, they still fail to fully and efficiently exploit the structural properties of HINs. Table \ref{tab:related-work} summarizes the key deficiencies of existing HIN GCN methods:
(1) Some of them \cite{han,hahe,deephgnn,magnn} require the user to specify several useful meta-paths for a specific task, which is difficult for users without professional knowledge.
(2) Many of them \cite{han,hahe,deephgnn,graphinception,activehne,hetgnn} cannot exploit all possible meta-paths, risking potential loss of important structural information. They only exploit a subset of all possible meta-paths, such as user-specified symmetric meta-paths \cite{han,hahe,deephgnn,magnn}, fixed-length meta-paths \cite{activehne}, or meta-paths that start from and end with the same object type \cite{graphinception}. HetGNN \cite{hetgnn} samples neighbors for a target object by random walk and aggregates them by Bi-LSTM. Some important structural information may be lost in this process.
(3) Some methods \cite{graphinception,activehne,r-gcn,decagon} do not distinguish the importance of meta-paths, failing to consider that not all meta-paths are useful for a specific task.
(4) Many of them \cite{han,hahe,deephgnn,magnn,graphinception,gtn} need to compute commuting matrices \cite{pathsim} by iterative multiplication of adjacency matrices, which has at least square time complexity to the number of involved objects. The resulting commuting matrices are very dense, and the longer the meta-paths, the denser the commuting matrices, which also increases the time complexity of the final graph convolution using these commuting matrices. Thus, those methods cannot scale well to large-scale HINs.

\begin{table}
\centering
\tabcolsep=0.14cm
\caption{Summary of Related Methods. (1) NU - not require user prior knowledge? (2) AMP - exploit all possible meta-paths? (3) UMP - automatically discover useful meta-paths? (4) LS - linear or quasi-linear scalability?}
\renewcommand{\arraystretch}{1.3}
\begin{tabular}{|c|c|c|c|c|c|c|c|}
\hline
Property & \tabincell{c}{\cite{r-gcn,decagon} \\ \cite{hetsann,hgt}}  & \tabincell{c}{\cite{han,hahe} \\ \cite{deephgnn,magnn}} & \cite{activehne,hetgnn} & \cite{graphinception} & \cite{gtn} & ie-HGCN \\
\hline
NU & $\surd$ & $\times$ & $\surd$ & $\surd$ & $\surd$ & $\surd$ \\
\hline
AMP	& $\surd$ & $\times$ & $\times$ & $\times$ & $\surd$ & $\surd$ \\
\hline
UMP	& $\times$ & $\surd$ & $\times$ & $\times$ & $\surd$ & $\surd$ \\
\hline
LS & $\surd$ & $\times$ & $\surd$ & $\times$ & $\times$ & $\surd$ \\
\hline
\end{tabular}
\label{tab:related-work}
\end{table}

Very recently, several HIN GCN methods \cite{gtn,hetsann,hgt} start to consider all possible meta-paths within a length limit. Among them, GTN \cite{gtn} first computes meta-path based graphs for all possible meta-paths, and then performs graph convolution. However, it has two disadvantages: 
(1) It only keeps a learnable importance weight for each relation. The weight is shared among all the related objects, which is not flexible enough to capture ``personalized'' important meta-paths for different objects. 
(2) It also needs to compute the commuting matrices (incorporating the relation weights) for each meta-path. Even by applying sparse-sparse matrix multiplication, it has at least square time complexity. Therefore it cannot well scale to large HINs (Section \ref{sec:scala}).  
HetSANN \cite{hetsann} and HGT \cite{hgt} directly aggregate the representations of heterogeneous neighbor objects by the multi-head attention mechanism \cite{gat}, and add a residual connection after each layer. However, (1) the interpretability of the model is hindered by the multi-head concatenation and residual connections, since they break the normalization property of probabilities and consequently it is difficult to assess the contribution of different parts; (2) in real-life power-law networks, objects could have very high degrees, which leads to calculation inefficiency of softmax \cite{word2vec-softmax} in attention and further affects scalability.

To fully and efficiently exploit structural properties of HINs, we propose an interpretable and efficient Heterogeneous Graph Convolutional Network (ie-HGCN), which directly takes an HIN as input and performs multiple layers of heterogeneous convolution on the HIN to learn task-specific object representations. Each layer of ie-HGCN has three key steps to obtain higher-level object representations:
(1) \textit{Projection}.
We define relation-specific projection matrices to project heterogeneous neighbor objects' hidden representations (input object features in the first layer) into a common semantic space corresponding to the target object type. We additionally define self-projection matrices (one for each object type) to project the representations of the target objects into the common semantic space as well. 
(2) \textit{Object-level Aggregation}. 
Given the adjacency matrix ${{\mathbf A}^{\Omega - \Gamma}}$ between ${\Omega}$-type target objects and their ${\Gamma}$-type neighbor objects, its row-normalized matrix ${\widehat {\mathbf A}^{\Omega - \Gamma}}$ is used to perform within-type aggregation among the neighbor objects of each target object.
We show the first two steps intrinsically define a heterogeneous spectral graph convolution operation on the bipartite graph described by ${{\mathbf A}^{\Omega - \Gamma}}$, with the projection matrices in the first step as convolution filters (Section \ref{sec:analys-spectral-conv}).
(3) \textit{Type-level Aggregation}.
We develop a type-level attention mechanism to learn the importance of different types of neighbors for a target object and perform type-level aggregation on the object-level aggregation results accordingly. 

Compared to existing HIN GCN methods, the proposed ie-HGCN has two salient features as follows:
\begin{enumerate}
    \item \textit{Interpretability}: The proposed object-level aggregation and type-level aggregation define a selection probability distribution in every object's neighborhood in each layer. By stacking multiple layers, the aggregation scheme facilitate adaptively learning the probabilistic importance score of each meta-path for each object, which assures ``personalized'' usefulness assessment of meta-paths and enhances the interpretability of the model. We formally prove that ie-HGCN can evaluate all possible meta-paths within a length limit (i.e., model depth) in Section \ref{sec:analys-meta-path}. 
    \item \textit{Efficiency}: ie-HGCN evaluates various meta-paths as the multi-layer iterative calculation proceeds. Hence, it avoids the computation of meta-path based graphs which is quite time-consuming. Moreover, in each layer ie-HGCN first uses row-normalized adjacency matrices (it is a reasonable choice and we will discuss it in Section~\ref{sec:obj-agg}) to aggregate a target object's neighbors of respective types as super ``type'' objects, and then uses type-level attention to aggregate them. This hierarchical aggregation architecture also makes our model efficient, because: (1) it avoids large-scale softmax calculation in the neighborhood of a target object; (2) an HIN typically only has a small number of object types, which leads to very efficient attention calculation. In Section \ref{sec:analys-time-complexity}, we analyze the time complexity of ie-HGCN to verify its quasi-linear scalability.
\end{enumerate}

The rest of this paper is organized as follows. In Section \ref{sec:relate}, we briefly introduce the related works. In Section \ref{sec:problem}, we formally define some concepts about HINs and the problem we study in this paper. In Section \ref{sec:model}, we present the proposed ie-HGCN model and analyze its theoretical properties. In Section \ref{sec:expe}, we conduct extensive experiments to show the superior performance of ie-HGCN against state-of-the-art methods on three benchmark datasets. Finally, we conclude this paper in Section \ref{sec:conclu}.

\section{Related Work}
\label{sec:relate}
\vspace{2mm} \noindent \textbf{HIN Embedding Methods:}
In recent years, a series of embedding methods are proposed to learn representations of objects in HINs.
EOE \cite{eoe}, PTE \cite{pte} and HEER \cite{heer} split an HIN into several bipartite graphs, and then use the LINE model \cite{line} to learn object representations by preserving the first-order or the second-order proximities. 
HIN2Vec \cite{hin2vec} learns representations of objects and meta-paths by predicting whether two objects have a specific relation. 
HINE \cite{hine} learns object representations by minimizing the distance between two distributions which respectively model the meta-path based proximity on the graph and the first-order proximity \cite{line} in the embedding space.
A lot of methods \cite{esim,metapath2vec,shne,herec,mcrec} first sample path instances guided by a set of user-specified meta-paths, and then learn object embeddings based on the resulting path instances.
Specifically, Esim \cite{esim} maximizes/minimizes the probability of each observed/noisy path instance. SHNE \cite{shne} and metapath2vec \cite{metapath2vec} learn object representations by their proposed heterogeneous skip-gram.
HERec \cite{herec} and MCRec \cite{mcrec} apply the HIN embedding idea to recommendation.
HERec performs type filtering to obtain homogeneous user/item sequences from the sampled path instances, and optimizes objective like skip-gram \cite{word2vec-softmax} to obtain user/item embeddings.
MCRec generates meta-path embeddings by using CNN to encode path instances, and then improves embeddings of users, items (initialized by matrix factorization) and meta-paths by co-attention with three-way interaction (user, meta-path, item).
However, these methods cannot learn task-specific embeddings. Although structural properties are exploited, none of them can automatically learn meta-path importance for all meta-paths within a length limit, not to mention task-specific importance.

\vspace{2mm} \noindent \textbf{GCNs for Homogeneous Graphs:}
Inspired by the great success of convolutional neural networks in computer vision, researchers try to generalize convolution on graphs \cite{gnn-survey-zhang}. 
Bruna \textit{et al.} \cite{bruna} first develop a graph convolution operation based on graph Laplacian in the spectral domain, inspired by the Fourier transformation in signal processing. 
Then, ChebNet \cite{chebnet} is proposed to improve its efficiency by using K-order Chebyshev polynomials. Kipf \textit{et al.} \cite{gcn} further introduce a first-order approximation of the K-order Chebyshev polynomials, to further build efficient deep models. 
Veli\v{c}kovi\'{c} \textit{et al.} propose GAT \cite{gat} to further learn different importance of nodes in a node's neighborhood by their proposed masked self-attention mechanism. 
Hamilton \textit{et al.} propose a general inductive framework GraphSAGE \cite{graphsage}. It learns to generate node embeddings by sampling neighbor nodes and aggregating their features by the proposed aggregator functions. 
All these methods are developed for homogeneous graphs. They cannot be directly applied to HINs because of heterogeneity. 

\vspace{2mm} \noindent \textbf{GCNs for Heterogeneous Graphs:} In recent years, a lot of HIN GCN methods are developed.
HAN \cite{han}, HAHE \cite{hahe}, DeepHGNN \cite{deephgnn} and GraphInception \cite{graphinception} transform an HIN into several homogeneous graphs based on user-specified symmetric meta-paths \cite{han,hahe,deephgnn}, or meta-paths that start from and end with the same object type \cite{graphinception}. Then, they apply GCN separately on each obtained homogeneous graph and aggregate the output representations by attention \cite{han,hahe,deephgnn} or concatenation \cite{graphinception}.
Given a set of meta-paths, MAGNN \cite{magnn} first performs intra-metapath aggregation by encoding all the object features along the path instances of a meta-path. Then it performs inter-metapath aggregation by attention to combine messages from multiple meta-paths. 
HetGNN \cite{hetgnn} first samples a fixed number of neighbors in the vicinity of an object via random walk with restart. Then it performs within-type aggregation of these neighbors with Bi-LSTM, and designs a type-level attention mechanism for type-level aggregation.
R-GCN \cite{r-gcn} and Decagon \cite{decagon} first perform within-type aggregation like GCN \cite{gcn}, and then sum the aggregation results of different types of neighbors.
ActiveHNE \cite{activehne} splits an HIN into several homogeneous/bipartite subgraphs and applies graph convolution on each subgraph separately. At the end of each convolutional layer, it concatenates the convolution results from those subgraphs as the output. 
GTN \cite{gtn} first computes meta-path based graphs for all possible meta-paths within a length limit by iterative matrix multiplication of the corresponding adjacency matrices weighted by learnable relation importance. Then it performs graph convolution on these resulting graphs.
HetSANN \cite{hetsann} and HGT \cite{hgt} extend GAT \cite{gat} to HINs. They directly calculate attention scores for all the neighbors of a target object and perform aggregation accordingly.
However, these methods either cannot discover useful meta-paths from all possible meta-paths \cite{han,hahe,deephgnn,graphinception,magnn,activehne,hetgnn,r-gcn,decagon,hetsann,hgt}, or have limited scalability \cite{han,hahe,deephgnn,graphinception,magnn,gtn}.

\section{Problem Formulation}
\label{sec:problem}
We first introduce some important concepts about HINs \cite{pathsim}, and then formally define the problem we study in this paper.

\begin{definition}
\label{def:hin}
\textbf{Heterogeneous Information Network (HIN).}
A heterogeneous information network is defined as $\mathcal{G} = (\mathcal{V}, \mathcal{E}, \phi, \psi)$, where $\mathcal{V}$ is the set of objects, $\mathcal{E}$ is the set of links. $\phi$ : $\mathcal{V} \to \mathcal{A}$ and $\psi$ : $\mathcal{E} \to \mathcal{R}$ are respectively object type mapping function and link type mapping function. $\mathcal{A}$ denotes the set of object types, and $\mathcal{R}$ denotes the set of relations, where $|\mathcal{A}|+|\mathcal{R}|>2$. Let $\mathcal{V}^\Omega$ denote the set of objects for type $\Omega \in \mathcal{A}$, and ${\mathcal{N}_\Omega} = \{ \Gamma|\Gamma,\Omega \in \mathcal{A}, {\langle {\Gamma,\Omega} \rangle} \in \mathcal{R}\}$ denote the set of neighbor object types of $\Omega$ that have relations from them to $\Omega$. $\Gamma \in {\mathcal{N}_\Omega}$ is a neighbor object type of $\Omega$. We abuse notation a bit to use $\Gamma$ also as the index of object type $\Gamma$ in ${\mathcal{N}_\Omega}$. The relation from $\Gamma$ to $\Omega$ is denoted as ${\langle {\Gamma,\Omega} \rangle}$ or $\Gamma \to \Omega$.
\end{definition}

\begin{definition}
\label{def:netschema}
\textbf{Network Schema.}
Given an HIN $\mathcal{G} = (\mathcal{V}, \mathcal{E}, \phi, \psi)$, $\phi$ : $\mathcal{V} \to \mathcal{A}$, $\psi$ : $\mathcal{E} \to \mathcal{R}$, the network schema is a directed graph defined over $\mathcal{A}$, with edges as relations from $\mathcal{R}$, denoted as $\mathcal{T} =(\mathcal{A}, \mathcal{R})$.
It is a meta template for $\mathcal{G}$.
\end{definition}

\begin{definition}
\label{def:metapath}
\textbf{Meta-path.}
A meta-path $\mathcal P$ is essentially a path defined on network schema $\mathcal{T}$.
It is denoted in the form of $A_1 \xrightarrow{R_1} A_2 \xrightarrow{R_2} \cdots \xrightarrow{R_l} A_{l+1}$ (abbreviated as $A_1A_2 \cdots A_{l+1}$), which describes a composite relation $R = R_1 \circ R_2 \circ \cdots \circ R_l$ between object types $A_1$ and $A_{l+1}$, where $\circ$ denotes the composition operator on relations. The subscript $l$ is the length of $\mathcal P$, i.e. the number of relations in $\mathcal P$. We say $\mathcal P$ is \textit{symmetric} if its corresponding composite relation $R$ is symmetric. A path instance of $\mathcal P$ is a concrete path in an HIN that instantiates $\mathcal P$.
\end{definition}

Figure \ref{fig:toy-dblp} shows a toy HIN of the DBLP bibliographic network (left) and its network schema (right). It contains 4 object types: ``Paper'' ($P$), ``Author'' ($A$), ``Conference'' ($C$) and ``Term'' ($T$), and 6 relations: ``Publishing'' and ``Published'' between $P$ and $C$, ``Writing'' and ``Written'' between $P$ and $A$, ``Containing'' and ``Contained'' between $P$ and $T$. For object type $P$, its set of neighbor object types is ${\mathcal{N}_P} = \{ C,A,T \}$. 
The meta-path $APA$ is symmetric, while the meta-path $CPA$ is asymmetric, and they both have length of 2. As shown in the figure, author $a_{2}$ has published paper $p_{2}$ in conference $c_{1}$, and thus we say $c_{1}p_{2}a_{2}$ is a path instance of $CPA$. A meta-path usually conveys a specific semantic meaning and different meta-paths have different importance for a specific task. For example, $APA$ means the co-author relationship between authors, while $CPA$ means authors publish papers in conferences. When predicting an author's affiliation, $APA$ is more helpful than $CPA$, since authors usually collaborate with colleagues in the same institution.

\begin{definition}
\label{def:hinlearn}
\textbf{HIN Representation Learning.}
Given an HIN $\mathcal{G}$ and a learning task $\mathcal{T}$, the problem is to learn $|\mathcal{A}|$ representation matrices for respective object types that are able to capture rich structural proximities and semantic information in $\mathcal{G}$ useful for $\mathcal{T}$. For each object type $\Omega \in \mathcal{A}$, the representation matrix is denoted as ${{\mathbf{X}}^\Omega} \in {{\mathbf{\mathbb{R}}}^{{|\mathcal{V}^\Omega|} \times d_\Omega}}$, where $d_\Omega \ll |\mathcal{V}^\Omega|$ is the representation dimensionality. For an object $v \in \mathcal{V}^\Omega$, its corresponding representation vector is the $v$-th row of ${{\mathbf{X}}^\Omega}$, which is a $d_\Omega$ dimensional vector.
\end{definition}

\begin{figure}
    \centering
    \includegraphics[width=0.8\columnwidth]{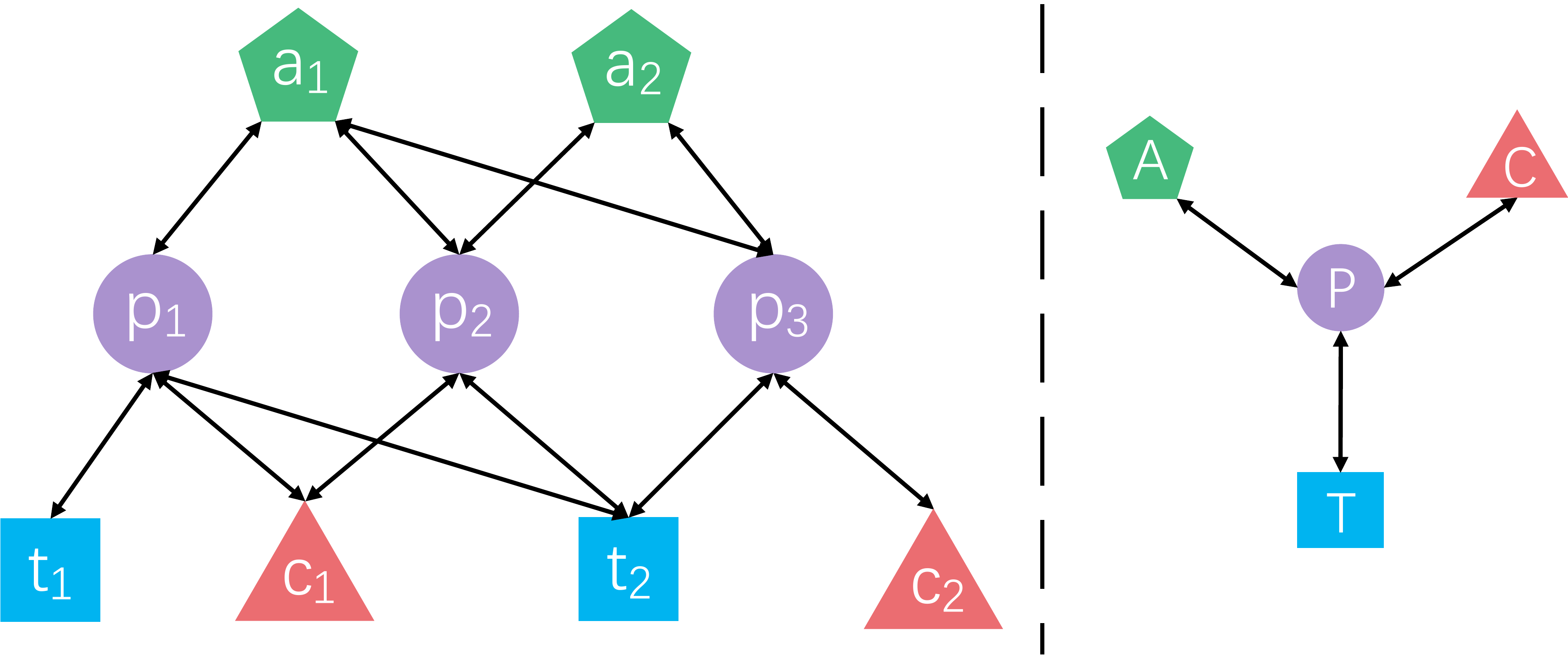}
    \caption{A toy HIN of DBLP (left) and its network schema (right).}
    \label{fig:toy-dblp}
\end{figure}

\section{Model}
\label{sec:model}

\begin{figure*}
\centering
\subfigure[An instance of ie-HGCN on DBLP]{\includegraphics[width=1.0\columnwidth]{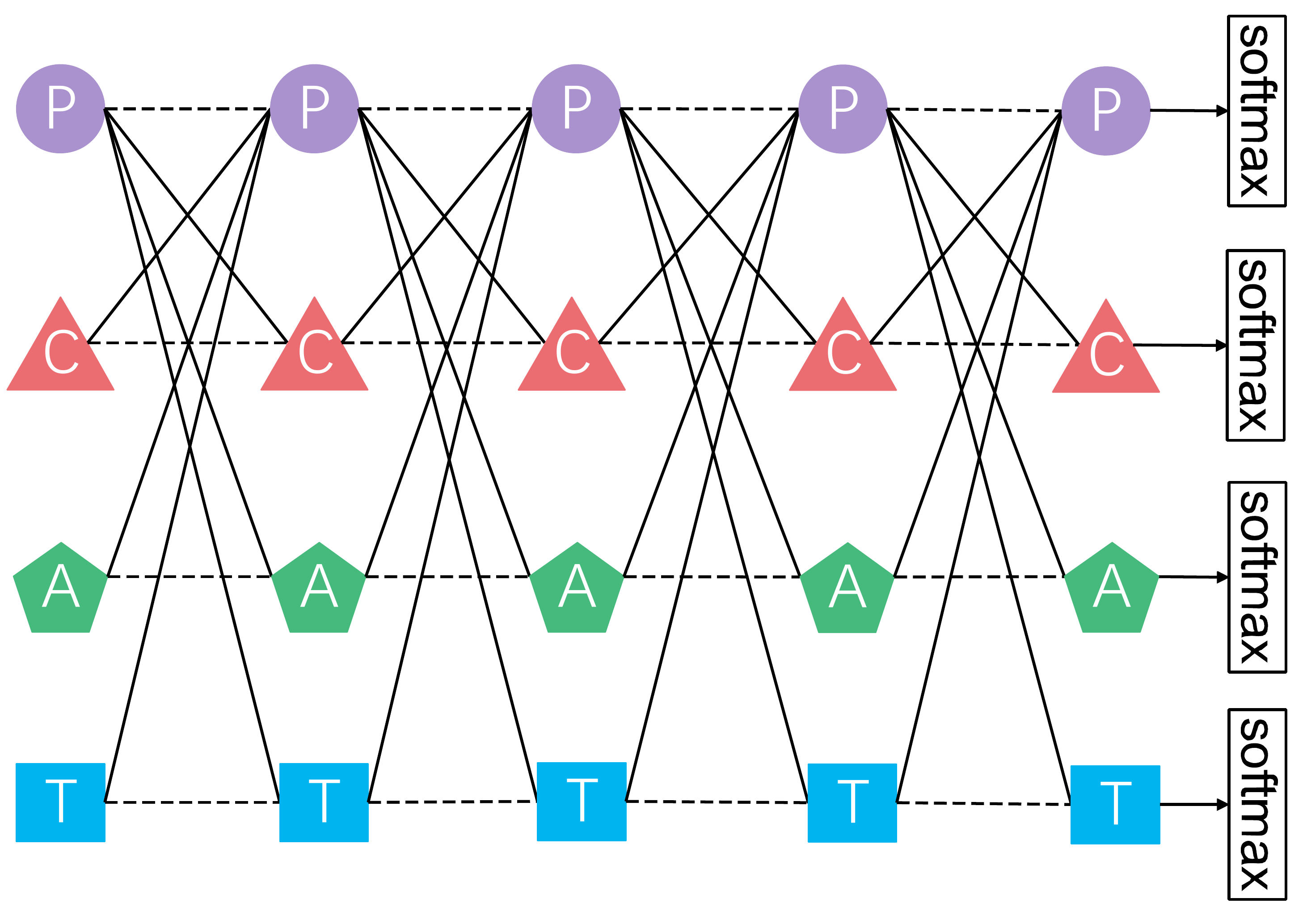}\label{fig:model-layer}}
\hspace{0.2cm}
\subfigure[The calculation flow in a $P$ block]{\includegraphics[width=1.0\columnwidth]{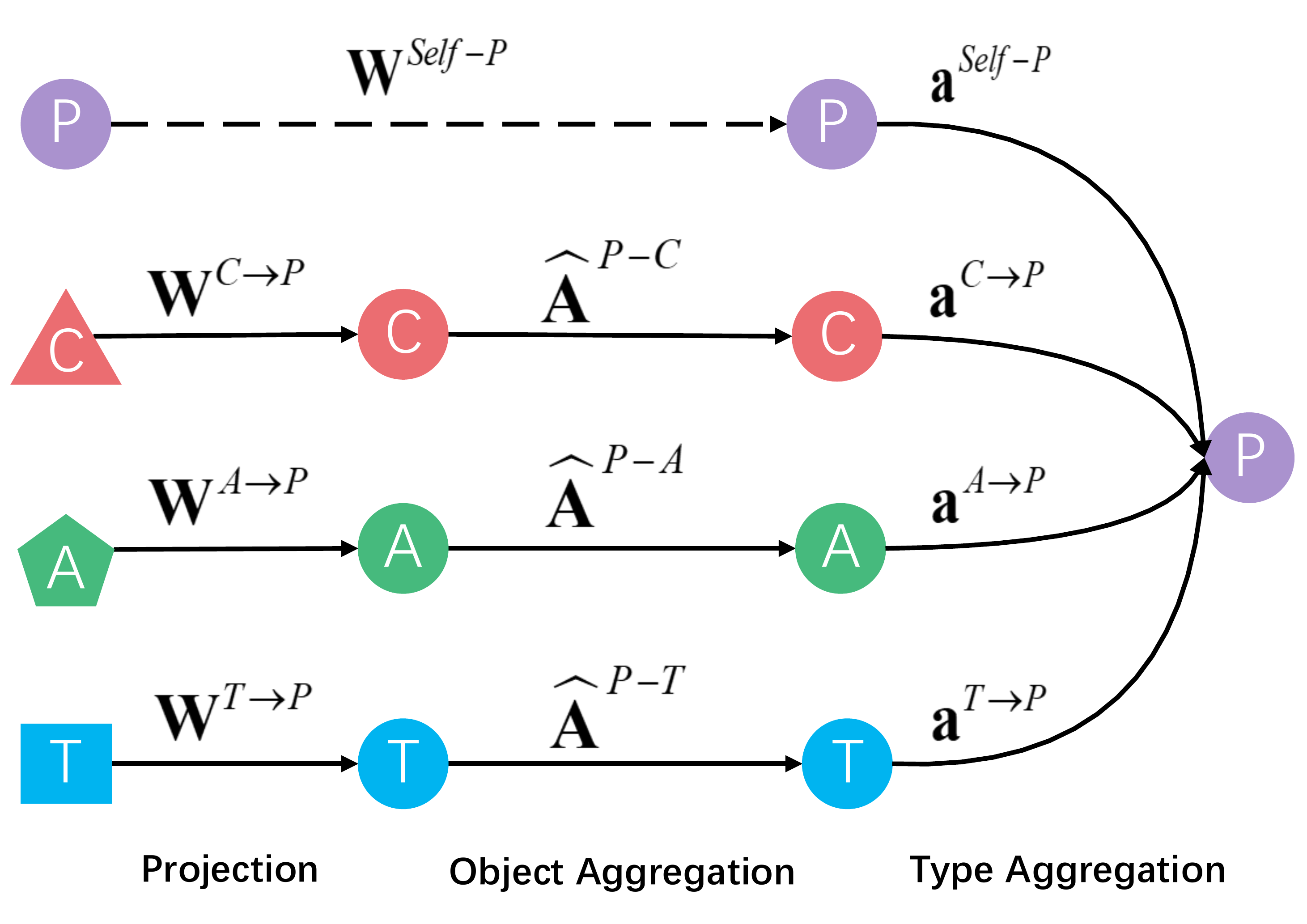}\label{fig:model-block}}
\caption{
The overall architecture of ie-HGCN on DBLP.
(a): An instance of ie-HGCN with 5 layers. The solid lines stand for the relation-specific projection, and the dashed lines stand for the dummy self-relation projection. In classification task, the softmax function can be applied to target object representations in the last layer to obtain prediction scores;
(b): The $P$ block in a layer. For $\Gamma \in {\mathcal{N}_P} = \{C, A, T\}$, ${{\mathbf W}^{\Gamma \to P}}$ projects the representations from $\Gamma$ semantic space into a new common ``Paper'' semantic space. ${{\mathbf W}^{Self-P}}$ projects the representations of paper objects from the original ``Paper'' semantic space into the new common ``Paper'' semantic space. 
We use the same shape to denote the projected object representations are located in the new common ``Paper'' semantic space. 
${\widehat {\mathbf A}^{P - \Gamma}}$ is used for object-level aggregation. Type-level attention is used for type-level aggregation.
}
\label{fig:model}
\end{figure*}

In this section, we present the ie-HGCN model. Figure \ref{fig:model-layer} shows the overall architecture of ie-HGCN on DBLP. Each layer consists of $|\mathcal{A}|$ blocks. In each block, three key calculation steps are performed. Figure \ref{fig:model-block} shows the calculation flow of the $P$ block in a layer. In the following, we elaborate on the three key calculation steps of the $\Omega \in \mathcal{A}$ block in a layer. The process is similar in other blocks. The main notations used in this paper are summarized in Table \ref{tab:notation}. We use bold uppercase/lowercase letters to denote matrices/vectors. For clarity, we omit layer indices of all the layer-specific notations.

\subsection{Projection}
\label{sec:proj}
For different types of objects, their features are located in different semantic spaces. To make these different types of object features comparable, in each block, we first project the representations of neighbor objects of different types into a new common semantic space. The input of the $\Omega$ block is a set of hidden representation matrices $\{ {{\mathbf H}^\Omega} \} \cup \{ {{\mathbf H}^\Gamma} | \Gamma \in {\mathcal{N}_\Omega} \} $ (input feature matrices in the first layer), obtained from the previous layer. ${{\mathbf H}^\Omega} \in {{\mathbb R}^{|{\mathcal{V}^\Omega}| \times {d_\Omega}}}$ and ${{\mathbf H}^\Gamma} \in {{\mathbb R}^{|{\mathcal{V}^\Gamma}| \times {d_\Gamma}}}$ are the representation matrices for $\mathcal{V}^\Omega$ and $\mathcal{V}^\Gamma$ respectively. For each neighbor object type $\Gamma \in {\mathcal{N}_\Omega}$, we define a \textit{relation-specific} projection matrix ${{\mathbf W}^{\Gamma \to \Omega}} \in {{\mathbb R}^{{d_\Gamma} \times d_\Omega^\prime }}$ corresponding to relation $\Gamma \to \Omega$. It projects ${{\mathbf H}^\Gamma}$ from the $\Gamma$ semantic space ${\mathbb R}^{d_\Gamma}$ into the new common semantic space ${\mathbb R}^{d_\Omega^\prime}$. Besides, to project ${{\mathbf H}^\Omega}$ from the lower feature space ${\mathbb R}^{d_\Omega}$ into the new common space ${\mathbb R}^{d_\Omega^\prime}$ as well, we additionally define a projection matrix ${{\mathbf W}^{Self - \Omega}} \in {{\mathbb R}^{{d_\Omega} \times d_\Omega^\prime }}$. Here ${{\mathbf W}^{Self - \Omega}}$ is simply a projection matrix, but not a relation-specific projection matrix. For convenience, we call ${Self - \Omega}$ as \textit{dummy self-relation}. When the real self-relation exists, i.e. ${\langle {\Omega,\Omega} \rangle} \in \mathcal{R}$, we use ${{\mathbf W}^{\Omega \to \Omega}}$ to denote the relation-specific projection matrix for ${\langle {\Omega,\Omega} \rangle}$. 
The projection is formulated as follows:
\begin{equation}
\label{eq:proj}
\begin{split}
{{\mathbf Y}^{Self - \Omega}} &= {{\mathbf H}^\Omega} \cdot {{\mathbf W}^{Self - \Omega}} \\
{{\mathbf Y}^{\Gamma \to \Omega}}      &= {{\mathbf H}^\Gamma} \cdot {{\mathbf W}^{\Gamma \to \Omega}}, \Gamma \in {\mathcal{N}_\Omega}
\end{split}
\end{equation}
where ${{\mathbf Y}^{Self - \Omega}} \in {{\mathbb R}^{|{\mathcal{V}^\Omega}| \times d_\Omega^\prime }}$ and ${{\mathbf Y}^{\Gamma \to \Omega}} \in {{\mathbb R}^{|{\mathcal{V}^\Gamma}| \times d_\Omega^\prime }}$ are projected hidden representations located in the new common space ${\mathbb R}^{d_\Omega^\prime}$.

For example, as illustrated in Figure \ref{fig:model-block}, ${{\mathbf W}^{C \to P}} \in {{\mathbb R}^{{d_C} \times d_P^\prime }}$ projects ${{\mathbf H}^C} \in {{\mathbb R}^{|{\mathcal{V}^C}| \times {d_C}}}$ from the ``Conference'' space ${\mathbb R}^{d_C}$ into a new common ``Paper'' space ${\mathbb R}^{d_P^\prime}$. ${{\mathbf H}^P} \in {{\mathbb R}^{|{\mathcal{V}^P}| \times {d_P}}}$ is originally located in the ``Paper'' space ${\mathbb R}^{d_P}$ for the previous layer. ${{\mathbf W}^{Self-P}} \in {{\mathbb R}^{{d_P} \times d_P^\prime }}$ projects it from the original space ${\mathbb R}^{d_P}$ into the new common space ${\mathbb R}^{d_P^\prime}$.

\begin{table}
    \tabcolsep=0.14cm
	\caption{Main Notations.}
	\label{tab:notation}
	\renewcommand{\arraystretch}{1.3}
	\begin{tabular}{ll}
		\hline
		Notations & Descriptions\\
		\hline
		${{\mathbf H}^\Omega}$ & Hidden representations of $\mathcal{V}^\Omega$ of previous layer\\
		${\mathbf{H}}^{\Omega \prime }$ & New representations of $\mathcal{V}^\Omega$ of current layer\\
		${{\mathbf W}^{Self - \Omega}}$ & Dummy self-relation projection matrix\\
		${{\mathbf W}^{\Gamma \to \Omega}}$ & Relation-specific projection matrix\\
		${{\mathbf Y}^{Self - \Omega}}$/${{\mathbf Y}^{\Gamma \to \Omega}}$ & Projected representations of $\mathcal{V}^\Omega$/$\mathcal{V}^\Gamma$\\
		${{\mathbf Z}^{Self - \Omega}}$ & i.e. ${{\mathbf Y}^{Self - \Omega}}$\\
		${{\mathbf Z}^{\Gamma \to \Omega}}$ & Aggregated representations from $\mathcal{V}^\Gamma$ to $\mathcal{V}^\Omega$\\
		${{\mathbf W}_q^\Omega}$/${{\mathbf W}_k^\Omega}$ & Attention query/key parameters\\
		${\mathbf w}_a^\Omega$ & Attention parameters\\
		${{\mathbf Q}^\Omega}$ & Mapped queries for $\mathcal{V}^\Omega$\\
		${{\mathbf K}^{Self - \Omega}}$/${{\mathbf K}^{\Gamma \to \Omega}}$ & Mapped keys for $\mathcal{V}^\Omega$/$\mathcal{V}^\Gamma$\\
		${{\mathbf e}^{Self - \Omega}}$/${{\mathbf e}^{\Gamma \to \Omega}}$ & Unnormalized attention coefficients for $\mathcal{V}^\Omega$/$\mathcal{V}^\Gamma$\\
		${{\mathbf a}^{Self - \Omega}}$/${{\mathbf a}^{\Gamma \to \Omega}}$ & Normalized attention coefficients for $\mathcal{V}^\Omega$/$\mathcal{V}^\Gamma$\\
		\hline
	\end{tabular}
\end{table}

\subsection{Object-level Aggregation}
\label{sec:obj-agg}
After projecting all the hidden representations of neighbor objects into a common semantic space, we then perform object-level aggregation. However, we cannot directly apply GCN \cite{gcn} to the aggregation, since the neighbors of an object are of different types in HINs, i.e., the heterogeneity of HINs. 
An adjacency matrix between two different types of objects may not even be a square matrix. In the following, let us take the example of aggregating hidden representations from ${\mathcal V}^{\Gamma}$ to ${\mathcal V}^{\Omega}$. Given the adjacency matrix ${{\mathbf A}^{\Omega - \Gamma}} \in {{\mathbb R}^{|{\mathcal{V}^\Omega}| \times |{\mathcal{V}^\Gamma}|}}$ between ${\mathcal V}^{\Omega}$ and ${\mathcal V}^{\Gamma}$, we first compute its \textit{row-normalized} matrix ${\widehat {\mathbf A}^{\Omega - \Gamma}} = {{({\mathbf D}^{\Omega - \Gamma}})}^{-1} \cdot {{\mathbf A}^{\Omega - \Gamma}}$, where ${\mathbf D}^{\Omega - \Gamma} = \text{diag}(\sum\nolimits_j {{\mathbf A}_{i,j}^{\Omega - \Gamma}}) \in {{\mathbb R}^{|{\mathcal{V}^\Omega}| \times |{\mathcal{V}^\Omega}|}}$ is the degree matrix. Then, we define the \textit{heterogeneous graph convolution} as follows:
\begin{equation}
\label{eq:hgcn}
\begin{split}
{{\mathbf Z}^{Self - \Omega}} &= {{\mathbf Y}^{Self - \Omega}} = {{\mathbf H}^\Omega} \cdot {{\mathbf W}^{Self - \Omega}} \\
     {{\mathbf Z}^{\Gamma \to \Omega}} &= {{\widehat {\mathbf A}}^{\Omega - \Gamma}} \cdot {{\mathbf Y}^{\Gamma \to \Omega}} \\
                 &= {{\widehat {\mathbf A}}^{\Omega - \Gamma}} \cdot {{\mathbf H}^\Gamma} \cdot {{\mathbf W}^{\Gamma \to \Omega}}, \Gamma \in {\mathcal{N}_\Omega}
\end{split}
\end{equation}
Each row of ${\widehat {\mathbf A}^{\Omega - \Gamma}}$ can serve as the normalized coefficients to compute a linear combination of the corresponding projected representations of ${\mathcal V}^{\Gamma}$. For symbolic consistency, we let ${{\mathbf Z}^{Self - \Omega}} = {{\mathbf Y}^{Self - \Omega}}$. Thus, we can obtain a set of convolved representations $\{ {{\mathbf{Z}}^{Self - \Omega}}, {{\mathbf{Z}}^{1 \to \Omega}}, \dots, {{\mathbf{Z}}^{\Gamma \to \Omega}}, \dots, {{\mathbf{Z}}^{|\mathcal{N}_\Omega| \to \Omega}} \}$, and each representation in the set contributes to $\mathcal{V}^\Omega$ from one aspect.
Take the $P$ block in Figure \ref{fig:model-block} as an example. We use ${\widehat {\mathbf A}^{P - C}}$, ${\widehat {\mathbf A}^{P - A}}$, ${\widehat {\mathbf A}^{P - T}}$ to respectively aggregate the projected representations of paper objects' neighbor conference objects, author objects and term objects. Thus, we obtain $\{ {{\mathbf{Z}}^{Self - P}}, {{\mathbf{Z}}^{C \to P}}, {{\mathbf{Z}}^{A \to P}}, {{\mathbf{Z}}^{T \to P}} \}$.

An alternative design choice is to employ an attention mechanism for object-level aggregation similar to that in \cite{gat}. However, we stick to using ${\widehat {\mathbf A}^{\Omega - \Gamma}}$ for object-level aggregation due to the following reasons: 
Firstly, (weighted) adjacency matrices could provide good enough weights for within-type aggregation (i.e., object-level aggregation). Intuitively, in a bibliography graph the papers an author has written equally contribute to his/her expertise; in a social activity graph the number of times a user visits an online shop naturally represent his/her preference degree towards that online shop.
Secondly, object-level attention is computationally inefficient, since in real-world complex networks, objects could have a large number of neighbors which results in calculation inefficiency of softmax in the attention mechanism \cite{word2vec-softmax}. By avoiding calculating softmax in large neighborhoods, the efficiency of ie-HGCN can be further enhanced. We will analyze the time complexity of ie-HGCN in Section \ref{sec:analys-time-complexity}.

Although Eq. (\ref{eq:hgcn}) is similar to the aggregation ideas in previous methods \cite{activehne,graphinception,r-gcn,decagon}, our design still has some novel aspects:
(1) Different from previous methods, we calculate the self-representation ${{\mathbf Z}^{Self - \Omega}}$, which, together with the attentive type-level aggregation introduced in the next subsection, enables ie-HGCN to evaluate the usefulness of all meta-paths within a length limit (model depth). We will prove this in Section~\ref{sec:analys-meta-path};
(2) Since ${{\widehat {\mathbf A}}^{\Omega - \Gamma}}$ is usually not a square matrix and consequently cannot be eigendecomposed to obtain Fourier basis, no previous work provides theoretical analysis to formally show Eq. (\ref{eq:hgcn}) is a proper convolution. In Section \ref{sec:analys-spectral-conv}, we will show that Eq. (\ref{eq:hgcn}) is intrinsically a spectral graph convolution on bipartite graphs;


\subsection{Type-level Aggregation}
\label{sec:type-agg}
To learn more comprehensive representations for $\mathcal{V}^\Omega$, we need to fuse representations from different types of neighbor objects. For a target object, the information from different types of neighbor objects could impact a specific task differently. Take paper objects in DBLP as an example. In the task of predicting a paper's quality, the representation of the conference where the paper is published could be more important.
To this end, we propose type-level attention to automatically learn the importance weights for different types of neighbor objects and aggregate the corresponding convolved representations from the previous step accordingly. 


The attention mechanism maps a set of queries and a set of key-value pairs to an output. In practice, we pack together queries, keys and values as rows into three matrices ${\mathbf Q}$, ${\mathbf K}$ and ${\mathbf V}$ respectively. Then it can be formulated as: $\text{softmax}(f({\mathbf Q},{\mathbf K})){\mathbf V}$, where $f$ is the attention function such as dot-product \cite{transformer} or neural network \cite{gat}. Here, the convolved representations from the previous step are values. We define a weight matrix ${{\mathbf W}_k^\Omega} \in {{\mathbb R}^{d_\Omega^\prime \times {d_a}}}$ to map them into keys, and define a weight matrix ${{\mathbf W}_q^\Omega} \in {{\mathbb R}^{d_\Omega^\prime  \times {d_a}}}$ to map ${{\mathbf Z}^{Self - \Omega}}$ into the query, where ${d_a}$ is the hidden layer dimensionality of the type-level attention. Formally,
\begin{equation}
\label{eq:query-keys}
\begin{split}
{{\mathbf Q}^\Omega}                &= {{\mathbf Z}^{Self - \Omega}} \cdot {{\mathbf W}_q^\Omega} \\
{{\mathbf K}^{Self - \Omega}}       &= {{\mathbf Z}^{Self - \Omega}} \cdot {{\mathbf W}_k^\Omega} \\
{{\mathbf K}^{\Gamma \to \Omega}}   &= {{\mathbf Z}^{\Gamma \to \Omega}} \cdot {{\mathbf W}_k^\Omega}, \quad \Gamma \in {\mathcal{N}_\Omega}
\end{split}
\end{equation}
It is intuitive to map $\{ {{\mathbf Z}^{Self - \Omega}} \} \cup \{ {{\mathbf Z}^{\Gamma \to \Omega}} | \Gamma \in {\mathcal{N}_\Omega} \}$ into keys, and map ${{\mathbf Z}^{Self - \Omega}}$ into the query, since we want to assess the importance of the convolved representations of neighbor types (including the self-representation) for each object in $\mathcal{V}^\Omega$. It is different from previous methods \cite{han,hahe,gtn}, where the query is a parameter vector. Note that mapping ${{\mathbf Z}^{Self - \Omega}}$ as the query is also the key to achieve personalized importance estimation for each $\Omega$ object. The attention function is implemented as follows:
\begin{equation}
\label{eq:typeatt}
\begin{split}
{{\mathbf e}^{Self - \Omega}} &= \text{ELU} \left( \left[ {{\mathbf K}^{Self - \Omega}}\| {{\mathbf Q}^\Omega} \right] \cdot {\mathbf w}_a^\Omega \right) \\
    {{\mathbf e}^{\Gamma \to \Omega}}    &= \text{ELU} \left( \left[ {{\mathbf K}^{\Gamma \to \Omega}}\| {{\mathbf Q}^\Omega} \right] \cdot {\mathbf w}_a^\Omega \right), \Gamma \in {\mathcal{N}_\Omega}
\end{split}
\end{equation}
where $\|$ denotes the row-wise concatenation operation, ${\mathbf w}_a^\Omega  \in {{\mathbb R}^{2{d_a} \times 1}}$ is the parameter vector, and ELU \cite{elu_nonlinear} is the activation function. The $i$-th element of ${{\mathbf e}^{Self}}$ and ${{\mathbf e}^{\Gamma}}$ respectively reflect the unnormalized importance of object $i$ itself and its $\Gamma$ neighbors when calculating its higher level representation. Then, the normalized attention coefficients are computed by applying the softmax function:
\begin{equation}
\label{eq:softmax}
\begin{split}
\left[ {{\mathbf a} ^ {Self - \Omega}}\| {{\mathbf a} ^ {1 \to \Omega}}\|...\|{{\mathbf a} ^ {\Gamma \to \Omega}}\|...\| {{\mathbf a} ^ {|{\mathcal{N}_\Omega}| \to \Omega}} \right] = \\
\text{softmax} \left( \left[ {{\mathbf e} ^ {Self - \Omega}}\| {{\mathbf e} ^ {1 \to \Omega}}\|...\|{{\mathbf e} ^ {\Gamma \to \Omega}}\|...\| {{\mathbf e} ^ {|{\mathcal{N}_\Omega}| \to \Omega}} \right] \right)
\end{split}
\end{equation}
where $\text{softmax}$ is applied to the operand row-wise.
The normalized attention coefficients are employed to compute the higher level representations of $\mathcal{V}^\Omega$ via a weighted combination of the corresponding values:
\begin{equation}
\label{eq:aggregate}
{\mathbf{H}}_{i,:}^{\Omega \prime } = \sigma \left( {{{\mathbf a}_i^{Self - \Omega}} \cdot {\mathbf Z}_{i,:}^{Self - \Omega} + \sum\limits_{\Gamma \in {\mathcal{N}_\Omega}} {{\mathbf a}_i^{\Gamma \to \Omega}} \cdot {{\mathbf Z}_{i,:}^{\Gamma \to \Omega}} } \right)
\end{equation}
where $\sigma$ is the nonlinearity, and the subscript $i$ ($i,:$) means the $i$-th element (row) of a vector (matrix), which corresponds to the $i$-th object in $\mathcal{V}^\Omega$. The new representations in ${\mathbf{H}}^{\Omega \prime }$ are in turn used as the input of the blocks in the next layer. The final representations of objects are output by the blocks in the last layer.

\subsection{Loss}
\label{sec:loss}
Once the final representations of objects are obtained from the last layer, they can be used for a variety of tasks such as classification, clustering, etc. The loss functions can be defined depending on specific tasks. For semi-supervised multi-class object classification, it can be defined as the sum (or weighted sum) of the cross-entropy over all the labeled objects for each object type:
\begin{equation}
\label{eq:loss}
{\mathcal L} = - \sum\limits_{\Omega \in {\mathcal A}} \sum\limits_{i \in {\mathcal I}^\Omega} \sum\limits_{j \in {\mathcal C}^\Omega} { {\mathbf L}_{i,j}^\Omega \cdot \text{ln} ({{\mathbf S}_{i,j}^\Omega}) }
\end{equation}
where ${\mathcal I}^\Omega$ is the set of indices of labeled objects in ${\mathcal V}^\Omega$, ${\mathcal C}^\Omega$ is the set of class indices for ${\mathcal V}^\Omega$, and ${\mathbf L}_{i,j}^\Omega$ and ${{\mathbf S}_{i,j}^\Omega}$ are respectively ground-truth label indicator and the predicted score of object $i \in {\mathcal I}^\Omega$ on class $j$. We can minimize the loss by back propagation. 
The overall training procedure of ie-HGCN is shown in Algorithm \ref{alg:hgcn}. Wherein, we index layers by square brackets.

\begin{algorithm}
\SetKwInOut{Input}{\textbf{Input}}\SetKwInOut{Output}{\textbf{Output}}
\Input{
	The HIN $\mathcal{G} = (\mathcal{V}, \mathcal{E}, \phi, \psi)$, $\phi$ : $\mathcal{V} \to \mathcal{A}$, $\psi$ : $\mathcal{E} \to \mathcal{R}$, \\
	The object feature matrices ${{\mathbf F}^\Omega}, \Omega \in \mathcal{A}$, \\
	The number of layers $N$. \\
}
\Output{
	The final representations ${{\mathbf H}^\Omega}[N], \Omega \in \mathcal{A}$.\\
}
\BlankLine
Initialize parameters, and let ${{\mathbf H}^\Omega}[1] = {{\mathbf F}^\Omega}, \Omega \in \mathcal{A}$ \;
\For{$n=2,...,N$}{
    \For{$\Omega \in \mathcal{A}$}{
        ${{\mathbf Z}^{Self - \Omega}}[n] = {{\mathbf H}^\Omega}[n-1] \cdot {{\mathbf W}^{Self - \Omega}}[n]$ \;
        \For{$\Gamma \in {\mathcal{N}_\Omega}$}{
            ${{\mathbf Z}^{\Gamma \to \Omega}}[n] = {{\widehat {\mathbf A}}^{\Omega - \Gamma}} \cdot {{\mathbf H}^\Gamma}[n-1] \cdot {{\mathbf W}^{\Gamma \to \Omega}}[n]$ \;
        }
        Compute normalized attention coefficients by ${{\mathbf W}_q^\Omega}[n]$, ${{\mathbf W}_k^\Omega}[n]$ and ${\mathbf w}_a^\Omega[n]$ according to Eq. (\ref{eq:query-keys}-\ref{eq:softmax})\;
        Compute ${\mathbf H}^\Omega[n]$ according to Eq. (\ref{eq:aggregate}) \;
    }
}
Compute loss and update parameters by gradient descent\;
\Return{${{\mathbf H}^\Omega}[N] , \Omega \in \mathcal{A}$. }
\caption{The pseudocode of ie-HGCN.}
\label{alg:hgcn}
\end{algorithm}

\subsection{Analysis}
\label{sec:analys}

\subsubsection{Automatically learning useful meta-paths}
\label{sec:analys-meta-path}
The most important highlight of ie-HGCN is that it evaluates task-specific importance of all possible meta-paths with length less than the model depth.
We formalize this property as a theorem as follows:
\begin{theorem}
\label{thm:autopath}
For an object type $\Omega \in \mathcal{A}$, let ${\mathcal P}_\Omega^{[0,n)}$ denote the set of all possible meta-paths of length greater than or equal to 0, less than $n$, and end with $\Omega$.
In the $n$-th layer, the output hidden representation ${{\mathbf H}^\Omega}[n]$ intrinsically evaluates task-specific importance of all the meta-paths in ${\mathcal P}_\Omega^{[0,n)}$.
\end{theorem}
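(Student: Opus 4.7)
The plan is to proceed by induction on the layer index $n$, aiming to exhibit ${\mathbf H}^\Omega[n]$ (or more precisely its pre-activation) as an explicit weighted sum indexed by the meta-paths in ${\mathcal P}_\Omega^{[0,n)}$. Concretely, I would prove that the $i$-th row can be written, modulo the per-layer nonlinearity $\sigma$, in the form
\begin{equation*}
\sum_{\mathcal P \in {\mathcal P}_\Omega^{[0,n)}} \alpha_i(\mathcal P)\, \bigl(\widehat{\mathbf M}_{\mathcal P}\cdot {\mathbf F}^{A_1}\cdot {\mathbf W}_{\mathcal P}\bigr)_{i,:},
\end{equation*}
where $\mathcal P = A_1 \to A_2 \to \cdots \to A_l \to \Omega$ ranges over the admissible meta-paths, $\widehat{\mathbf M}_{\mathcal P}$ is the product of the row-normalised relation matrices $\widehat{\mathbf A}^{A_{k+1}-A_k}$ along $\mathcal P$, ${\mathbf W}_{\mathcal P}$ is the composition of the relation-specific projection matrices (including any $Self$ factors), and $\alpha_i(\mathcal P)$ is the product of the type-level attention coefficients encountered at each block along the path. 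Because $\alpha_i(\mathcal P)$ depends on object $i$ and is shaped by back-propagation from the task loss, it is exactly the personalised, task-specific importance score promised by the theorem.

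The base case $n=1$ is immediate: ${\mathcal P}_\Omega^{[0,1)}$ contains only the length-zero path $\Omega$, and Algorithm~\ref{alg:hgcn} initialises ${\mathbf H}^\Omega[1]={\mathbf F}^\Omega$ with coefficient $1$. For the inductive step, I would unfold the three steps of the $\Omega$ block at layer $n+1$. The self-branch ${\mathbf Z}^{Self-\Omega}[n+1]={\mathbf H}^\Omega[n]\cdot {\mathbf W}^{Self-\Omega}[n+1]$ inherits from the hypothesis all paths in ${\mathcal P}_\Omega^{[0,n)}$, each with projection extended by ${\mathbf W}^{Self-\Omega}[n+1]$. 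Each neighbour branch ${\mathbf Z}^{\Gamma\to\Omega}[n+1]=\widehat{\mathbf A}^{\Omega-\Gamma}\cdot {\mathbf H}^\Gamma[n]\cdot {\mathbf W}^{\Gamma\to\Omega}[n+1]$ left-multiplies every path $\mathcal Q \in {\mathcal P}_\Gamma^{[0,n)}$ hidden inside ${\mathbf H}^\Gamma[n]$ by $\widehat{\mathbf A}^{\Omega-\Gamma}$, which is exactly the operation of prepending the relation $\langle\Gamma,\Omega\rangle$ and therefore yields a path in ${\mathcal P}_\Omega^{[0,n+1)}$ of length $|\mathcal Q|+1$. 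The type-level attention (Eq.~(\ref{eq:aggregate})) then forms a convex combination indexed by $Self$ and by $\Gamma\in{\mathcal N}_\Omega$, multiplying each previously accumulated coefficient by ${\mathbf a}_i^{Self-\Omega}$ or ${\mathbf a}_i^{\Gamma\to\Omega}$. The disjoint partition
\begin{equation*}
{\mathcal P}_\Omega^{[0,n+1)} = {\mathcal P}_\Omega^{[0,n)} \;\sqcup\; \bigsqcup_{\Gamma\in{\mathcal N}_\Omega} \bigl\{ \langle\Gamma,\Omega\rangle\circ\mathcal Q : \mathcal Q \in {\mathcal P}_\Gamma^{[0,n)} \bigr\}
\end{equation*}
closes the induction: every meta-path of length less than $n+1$ ending at $\Omega$ is produced exactly once, and its coefficient is precisely the product of all type-level attention weights visited on the path.

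The main obstacle, and the place that needs care, is the per-layer nonlinearity $\sigma$ (and the ELU inside Eq.~(\ref{eq:typeatt})), which strictly speaking destroys a purely additive "sum over meta-paths" decomposition. I would handle this by stating the claim at the level of the linear substructure produced by the projection and convolution operations, so that the meta-path indexing describes which information sources ${\mathbf F}^{A_1}$ are combined and with what multiplicative attention weight $\alpha_i(\mathcal P)$; the nonlinearity only reshapes the combined signal but does not introduce or delete meta-paths. A secondary subtlety is avoiding double counting when the schema contains cycles (e.g.\ $PAP$ and $PCP$ in DBLP): because the self-branch at layer $n+1$ produces precisely ${\mathcal P}_\Omega^{[0,n)}$ and each neighbour branch produces strictly longer paths whose last prepended relation is $\langle\Gamma,\Omega\rangle$, the partition above is genuinely disjoint and each meta-path is accounted for once. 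Once this scaffolding is in place, the remainder is a mechanical unwinding of Eq.~(\ref{eq:hgcn}) and Eq.~(\ref{eq:aggregate}) using the inductive hypothesis applied to each neighbour type $\Gamma$.
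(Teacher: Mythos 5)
Your induction is structurally the same as the paper's: base case ${\mathbf H}^\Omega[1]={\mathbf F}^\Omega$ covering the length-$0$ path, and a step case in which the self branch carries ${\mathcal P}_\Omega^{[0,n)}$ forward while each neighbour branch extends every path in ${\mathcal P}_\Gamma^{[0,n)}$ by the relation $\langle\Gamma,\Omega\rangle$ via $\widehat{\mathbf A}^{\Omega-\Gamma}$, after which the type-level attention combines the branches. That argument does establish the coverage claim in the theorem, and your explicit unrolling into products of normalised adjacency matrices, projection matrices and attention coefficients is a more concrete rendering of what the paper states informally. Your handling of the nonlinearity is also consistent with the level of rigour at which the paper itself uses the word ``evaluates''.

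There is, however, one concrete error: the union ${\mathcal P}_\Omega^{[0,n+1)} = {\mathcal P}_\Omega^{[0,n)} \cup \bigcup_{\Gamma}\{\langle\Gamma,\Omega\rangle\circ\mathcal Q\}$ is \emph{not} disjoint, contrary to your ``secondary subtlety'' paragraph. Any meta-path of length between $1$ and $n-1$ whose last relation is $\langle\Gamma,\Omega\rangle$ appears both in ${\mathcal P}_\Omega^{[0,n)}$ (delivered by the self branch) and in the $\Gamma$ branch; equivalently, one meta-path is realised by many distinct routes through the unrolled computation, differing in which layers insert the dummy self-relation. The paper makes this explicit: in Table \ref{tab:path-scores} the length-$2$ meta-path $CPA$ corresponds to six merged paths in a five-layer model, and the proof ends by noting that ``we need to merge equivalent meta-paths due to dummy self-relations.'' Consequently your coefficient $\alpha_i(\mathcal P)$ cannot be a single product of attention weights along ``the'' path, and ${\mathbf W}_{\mathcal P}$ is not a single composition: the importance of $\mathcal P$ must be the \emph{sum over all realisations} of the per-realisation products (each realisation also carrying its own projection composition). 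This does not break the theorem as stated --- coverage only needs a union, not a partition --- but your claimed exactly-once decomposition is false as written and would need to be restated as a sum over realisations grouped by their underlying meta-path.
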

\begin{proof}
\label{prof:autopath}
We prove the theorem by mathematical induction. For conciseness, ``${{\mathbf H}^\Omega}[n]$ evaluates ${\mathcal P}_\Omega^{[0,n)}$'' means ``${{\mathbf H}^\Omega}[n]$ evaluates the task-specific importance of all the meta-paths in ${\mathcal P}_\Omega^{[0,n)}$''.

\textbf{The base case}: When $n=1$, ${{\mathbf H}^\Omega}[1] = {{\mathbf F}^\Omega}$ is the input features of ${\mathcal V}^{\Omega}$. Obviously, the meta-path evaluated can be expressed as $\Omega$, which has a length of $0$ and ends with $\Omega$, i.e., ${\mathcal P}_\Omega^{[0,1)}$. In this case, the importance is defaulted to probability 1.

\textbf{The step case}: Assume that the theorem holds when $n=m-1 \geq 1$, i.e., ${{\mathbf H}^\Omega}[m-1]$ evaluates ${\mathcal P}_\Omega^{[0,m-1)}$. When $n=m$, ${{\mathbf H}^\Omega}[m]$ is an attention-weighted combination of ${{\mathbf Z}^{Self - \Omega}}[m]$ and ${{\mathbf Z}^{\Gamma \to \Omega}}[m], \forall \Gamma \in {\mathcal{N}_\Omega}$. According to Eq.~(\ref{eq:hgcn}), ${{\mathbf Z}^{Self - \Omega}}[m]$ is a linear projection of ${{\mathbf H}^\Omega}[m-1]$ which evaluates ${\mathcal P}_\Omega^{[0,m-1)}$ by assumption; ${{\mathbf Z}^{\Gamma \to \Omega}}[m]={{\widehat {\mathbf A}}^{\Omega - \Gamma}} \cdot {{\mathbf H}^\Gamma}[m-1] \cdot {{\mathbf W}^{\Gamma \to \Omega}}[m]$, where ${{\mathbf H}^\Gamma}[m-1]$ evaluates ${\mathcal P}_\Gamma^{[0,m-1)}$ by assumption. Due to the aggregation via ${\widehat {\mathbf A}}^{\Omega - \Gamma}$, the heterogeneous graph convolution corresponds to concatenating the relation $\Gamma \to \Omega$ at the end of each meta-path in ${\mathcal P}_\Gamma^{[0,m-1)}$. Since we perform aggregation for all $\Gamma \in {\mathcal{N}_\Omega}$, this results in ${\mathcal P}_\Omega^{[1,m)}$. By uniting ${\mathcal P}_\Omega^{[0,m-1)}$ from ${{\mathbf Z}^{Self - \Omega}}[m]$ and ${\mathcal P}_\Omega^{[1,m)}$ from ${{\mathbf Z}^{\Gamma \to \Omega}}[m]$, we can conclude ${{\mathbf H}^\Omega}[m]$ evaluates ${\mathcal P}_\Omega^{[0,m)}$. 

By mathematical induction, we can safely conclude that ${{\mathbf H}^\Omega}[n]$ evaluates ${\mathcal P}_\Omega^{[0,n)}$.
The importance of a meta-path is determined by the row-normalized adjacency matrices and the type-level attention coefficients across layers which are all normalized probability distributions. Note that the object-level and type-level aggregations in a block together define a selection probability distribution in a target node's neighborhood. For each path instance of a meta-path ending with the same target object, we can calculate a proper importance score by multiplying its corresponding selection probabilities across layers. The importance of this meta-path w.r.t. the target object can then be aggregated accordingly.

\end{proof}

The ie-HGCN can capture objects' personalized preference for different meta-paths because each object has its own attention coefficients. As stated in the proof of Theorem 1, the importance score of a meta-path w.r.t. a specific target object is obtained by summing the scores of all its path instances ending with that object. The score of a path instance is calculated by multiplying the selection probabilities between objects along the path, i.e., the attention coefficients multiplied by the link weights (from the corresponding row-normalized adjacency matrices for real relations, or 1 for dummy self-relations). Since path instances often share subpaths, we could efficiently aggregate subpath scores iteratively during the forward propagation of ie-HGCN, recording in each block the aggregation scores for all possible meta-paths up to that block. Finally, we need to merge equivalent meta-paths due to dummy self-relations.

To further improve efficiency, we could approximate the importance score of a meta-path in a global sense. Specifically, we calculate the mean attention distribution in each block. These mean distributions reflect the general trends of relation selection in these blocks. For example, for the task of predicting a paper's quality, the mean attention distribution in the top paper block would probably show a peak for the paper-conference relation. The global importance score of a meta-path could be calculated by treating these mean distributions as selection probability distributions over relations. Computation details are given in Section \ref{sec:attnstudy}.

\subsubsection{Connection to spectral graph convolution}
\label{sec:analys-spectral-conv}
We can also derive the heterogeneous graph convolution presented in Eq.~(\ref{eq:hgcn}) by connecting to the spectral domain of bipartite graphs (when the self-relation exists, the following derivation still holds by setting $\Gamma=\Omega$). For ${\mathcal{V}^\Omega}$ and ${\mathcal{V}^\Gamma}$, given their representation matrices ${\mathbf H}^\Omega$ and ${\mathbf H}^\Gamma$ and the adjacency matrices ${{\mathbf A}^{\Omega - \Gamma}}$ and ${{\mathbf A}^{\Gamma - \Omega}}$ between them, the challenge is that we cannot directly eigendecompose ${{\mathbf A}^{\Omega - \Gamma}}$ and ${{\mathbf A}^{\Gamma - \Omega}}$ as they may not be square matrices. Thus, we define the \textit{augmented adjacency matrix} $\widetilde{\mathbf A} \in {\mathbb R}^{(|{\mathcal{V}^\Omega}| + |{\mathcal{V}^\Gamma}|) \times (|{\mathcal{V}^\Omega}| + |{\mathcal{V}^\Gamma}|)}$ and the \textit{augmented representation matrix} $\widetilde{\mathbf H} \in {{\mathbb R}^{(|{\mathcal{V}^\Omega}| + |{\mathcal{V}^\Gamma}|) \times \text{max}({d_\Omega},{d_\Gamma})}}$ as follows:
$$
\widetilde{\mathbf A} =
\left[
  \begin{array}{c;{2pt/1pt}c}
                            \mathbf 0                         &   {{\mathbf A}^{\Omega - \Gamma}} \\
    \hdashline[3pt/2pt]
    \rule{0pt}{10pt}        {{\mathbf A}^{\Gamma - \Omega}}   &   \mathbf 0                       \\
  \end{array}
\right]
,\hspace{2mm}
\widetilde{\mathbf H} =
\left[
  \begin{array}{c}
    \rule{0pt}{10pt} {\mathbf H}^\Omega \\
    \hdashline[3pt/2pt]
    \rule{0pt}{10pt} {\mathbf H}^\Gamma \\
  \end{array}
\right]
$$
where $\mathbf 0$'s denote square zero matrices, and $\widetilde{\mathbf H}$ is properly padded by zeros since generally $d_\Omega \neq d_\Gamma$. Our convolution is related to \textit{random walk Laplacian} which is defined as: $\widetilde{\mathbf L}_{rw} = \mathbf I - {\widetilde{\mathbf D}}^{-1}  \widetilde{\mathbf A}$, where ${\widetilde{\mathbf D}} = \text{diag}(\sum\nolimits_j {{\widetilde{\mathbf A}}_{i,j}})$. We also have $\widetilde{\mathbf L}_{rw} = {\mathbf U} {\mathbf \Lambda} {\mathbf U}^{-1}$, where ${\mathbf U}$ and $\mathbf \Lambda$ are respectively $\widetilde{\mathbf L}_{rw}$'s eigenvectors and eigenvalues. ${\mathbf U}^{-1}$ and ${\mathbf U}$ define graph Fourier transform and inverse transform respectively. Then the bipartite graph convolution is defined as the multiplication of a parameterized filter $g_{\theta}=\text{diag}(\theta)$ ($\theta \in \mathbb{R}^{|{\mathcal{V}^\Omega}| + |{\mathcal{V}^\Gamma}|}$ in the Fourier domain) and a signal $\widetilde{\mathbf{h}}$ (a column of $\widetilde{\mathbf{H}}$) in the Fourier domain:
\begin{equation}\nonumber
\label{eq:spectural-conv}
\begin{split}
g_{\theta} \star \widetilde{\mathbf{h}} 
&= \mathbf{U} g_{\theta} \mathbf{U}^{-1} \widetilde{\mathbf{h}} \approx \mathbf{U} (\sum_{k=0}^{K} \theta_k T_k (\mathbf{\Lambda}) ) \mathbf{U}^{-1} \widetilde{\mathbf{h}} \\
&= \sum_{k=0}^{K} \theta_k T_k ({\mathbf U} {\mathbf \Lambda} {\mathbf U}^{-1}) \widetilde{\mathbf{h}} = \sum_{k=0}^{K} \theta_k T_k (\widetilde{\mathbf{L}}_{rw}) \widetilde{\mathbf{h}}
\end{split}
\end{equation}
where $g_{\theta}$ can be regarded as a function of ${\mathbf \Lambda}$ and is efficiently approximated by the truncated Chebyshev polynomials $T_k(x)$ \cite{chebnet}. $T_k(x) = 2xT_{k-1}(x) - T_{k-2}(x)$, $T_0(x)=1$, and $T_1(x)=x$. We note that in general, $\widetilde{\mathbf L}_{rw}$ has the same eigenvalues as symmetric normalized Laplacian $\widetilde{\mathbf L}_{sys}$ \cite{spectral_klopotek}, which lie in $[0, 2]$ \cite{spectral_chung}. For the purpose of numerical stability, we replace $\widetilde{\mathbf L}_{rw}$ with $\widetilde{\mathbf P} = \mathbf I - \widetilde{\mathbf L}_{rw}$ without affecting Fourier basis, so as to rescale the eigenvalues to [-1, 1] \cite{chebnet}. $\widetilde{\mathbf P}$ can be expressed as follows:
$$
\widetilde{\mathbf P} = {\widetilde{\mathbf D}}^{-1} \widetilde{\mathbf A} =
\left[
  \begin{array}{c;{2pt/1pt}c}
                            \mathbf 0                         &   {\widehat {\mathbf A}^{\Omega - \Gamma}} \\
    \hdashline[3pt/2pt]
    \rule{0pt}{10pt}        {\widehat {\mathbf A}^{\Gamma - \Omega}}   &   \mathbf 0                       \\
  \end{array}
\right]
$$
where ${\widehat {\mathbf A}^{\Omega - \Gamma}} \in {{\mathbb R}^{|{\mathcal{V}^\Omega}| \times |{\mathcal{V}^\Gamma}|}}$ is the row-normalized adjacency matrix between ${\mathcal V}^{\Omega}$ and ${\mathcal V}^{\Gamma}$. Now, the convolution operation can be expressed as: $\sum_{k=0}^{K} \theta_k T_k (\widetilde{\mathbf P}) \widetilde{\mathbf h}$, which is K-localized, since it can be easily verified that $(\widetilde{\mathbf P})^k$ denotes the transition probability of objects to their k-order neighborhood. Following GCN \cite{gcn}, we further let $K=1$ and stack multiple layers to recover a rich class of convolutional filter functions. Then we have $g_{\theta} \star \widetilde{\mathbf h} \approx \theta_0 \widetilde{\mathbf h} + \theta_1 \widetilde{\mathbf P} \widetilde{\mathbf h}$. Generalizing the filter to multiple ones, and the signal to multiple channels, the two terms can be expressed as follows:
$$
\widetilde{\mathbf H}{{\mathbf \Theta}_0} = \left[
  \begin{array}{c}
    \rule{0pt}{10pt} {\mathbf H}^\Omega {{\mathbf \Theta}_0}\\
    \hdashline[3pt/2pt]
    \rule{0pt}{10pt} {\mathbf H}^\Gamma {{\mathbf \Theta}_0}\\
  \end{array}
\right]
,\hspace{4mm}
\widetilde{\mathbf P} \widetilde{\mathbf H}  {{\mathbf \Theta}_1} =
\left[
  \begin{array}{c}
    \rule{0pt}{10pt} {\widehat {\mathbf A}^{\Omega - \Gamma}} {\mathbf H}^{\Gamma} {{\mathbf \Theta}_1}  \\
    \hdashline[3pt/2pt]
    \rule{0pt}{10pt} {\widehat {\mathbf A}^{\Gamma - \Omega}} {\mathbf H}^{\Omega} {{\mathbf \Theta}_1}  \\
  \end{array}
\right]
$$
The above two equations recover the calculation of $\mathbf{Z}^{Self - \Omega}$ ($\mathbf{Z}^{Self - \Gamma}$) and $\mathbf{Z}^{\Gamma \to \Omega}$ ($\mathbf{Z}^{\Omega \to \Gamma}$) in Eq.~(\ref{eq:hgcn}). They differ from Eq.~(\ref{eq:hgcn}) only by using the same parameters ${{\mathbf \Theta}_0}$ and ${{\mathbf \Theta}_1}$ for the two types $\Omega$ and $\Gamma$. In ie-HGCN, we use separate parameters, $\mathbf{W}^{Self - \Omega}$, ${{\mathbf W}^{\Gamma \to \Omega}}$ and $\mathbf{W}^{Self-\Gamma}$, ${{\mathbf W}^{\Omega \to \Gamma}}$ for $\Omega$ and $\Gamma$ respectively, to improve model flexibility. Another difference is that we aggregate $\mathbf{Z}^{Self - \Omega}$ ($\mathbf{Z}^{Self - \Gamma}$) and $\mathbf{Z}^{\Gamma \to \Omega}$'s ($\mathbf{Z}^{\Omega \to \Gamma}$'s) through the type-level attention rather than simply adding them.

\subsubsection{High computational efficiency}
\label{sec:analys-time-complexity}
Most previous methods \cite{han,hahe,deephgnn,graphinception,gtn} need to compute commuting matrices by iterative multiplication of adjacency matrices, which has at least square time complexity. Our ie-HGCN performs heterogeneous graph convolution on an HIN directly in each layer, which is more efficient.

The time complexity of ie-HGCN in the $\Omega$ block consists of two parts, corresponding to the heterogeneous graph convolution and the type-level attention respectively.
The first part is determined by Eq. (\ref{eq:hgcn}). 
The calculation of $\mathbf{Z}^{Self - \Omega}$ requires ${\mathcal O}( |{\mathcal V}^{\Omega}| \cdot {d_\Omega} \cdot {d_\Omega^\prime} )$.
By applying sparse-dense matrix multiplication, the calculation of $\mathbf{Z}^{\Gamma \to \Omega}$ requires ${\mathcal O}( |{\mathcal E}^{\Omega - \Gamma}| \cdot {d_\Gamma} \cdot {d_\Omega^\prime} )$, where $|{\mathcal E}^{\Omega - \Gamma}|$ is the number of links between ${\mathcal V}^{\Omega}$ and ${\mathcal V}^{\Gamma}$.
The second part is due to Eqs. (\ref{eq:query-keys} - \ref{eq:aggregate}):
Eq. (\ref{eq:query-keys}) requires ${\mathcal O}( (2 + |{\mathcal{N}_\Omega}|) \cdot (|{\mathcal V}^{\Omega}| \cdot {d_\Omega^\prime} \cdot {d_a}) )$;
Eq. (\ref{eq:typeatt}) requires ${\mathcal O}( (1 + |{\mathcal{N}_\Omega}|) \cdot (|{\mathcal V}^{\Omega}| \cdot 2{d_a}) )$;
Eq. (\ref{eq:softmax}) requires ${\mathcal O}( (1 + |{\mathcal{N}_\Omega}|) \cdot |{\mathcal V}^{\Omega}| )$;
Eq. (\ref{eq:aggregate}) requires ${\mathcal O}( (1 + |{\mathcal{N}_\Omega}|) \cdot (|{\mathcal V}^{\Omega}| \cdot {d_\Omega^\prime}) )$.

In practice, the dimensionality hyperparameters and the term $|{\mathcal{N}_\Omega}|$ are typically very small compared to the number of objects and links in the HIN. Thus, taking all the $N$ layers, all types of objects and all types of links into consideration, the overall time complexity is ${\mathcal O}( N \cdot (|{\mathcal V}| + |{\mathcal E}|) )$, which is linear to the total number of objects and links in an HIN.

\section{Experiments}
\label{sec:expe}
In this section, we conduct extensive experiments to show the performance of ie-HGCN. We use three widely used and publicly available real-world networks (IMDB, ACM, DBLP) to construct three HIN datasets, and compare the performance of ie-HGCN against ten baselines on these HINs.

Our method is implemented by PyTorch, and the source code is available at GitHub\footnote{\url{https://github.com/kepsail/ie-HGCN/}}. The source codes of the other baselines are provided by their authors, which are implemented by either PyTorch\footnote{\url{https://pytorch.org/}} or TensorFlow\footnote{\url{https://www.tensorflow.org/}}. All experiments are conducted on a server with 16 Intel Xeon E5-2620 CPUs, 1 Nvidia GeForce GTX 1080Ti with 12GB GPU memory, and 128GB main memory. Unless otherwise specified, all experiments are performed on GPU to accelerate computation.

\subsection{Datasets}
\label{sec:data}

\begin{table}
\centering
\caption{Dataset Statistics (the notation * marks real features).}
\renewcommand{\arraystretch}{1.3}
\begin{tabular}{|c|c|c|c|c|}
\hline
Dataset & Objects & Number & Features & Classes\\
\hline
\multirow{4}[0]{*}{DBLP}
&	A     & 4057  & 128 & 4	\\
\cline{2-5}
&	P     & 14328 & 128 & -	\\
\cline{2-5}
&	C     & 20    & 128 & -	\\
\cline{2-5}
&	T     & 8898  & 128 & -	\\
\hline
\multirow{3}[0]{*}{ACM}
&	P     & 4025  & 128$^*$ & 3 \\
\cline{2-5}
&	A     & 7167  & 128 & -	\\
\cline{2-5}
&	S     & 60    & 128	& - \\
\hline
\multirow{4}[0]{*}{IMDB}
&	M     & 3328  & 14$^*$ & 4 \\
\cline{2-5}
&	A     & 42553 & 128 & - \\
\cline{2-5}
&	U     & 2103  & 128 & - \\
\cline{2-5}
&	D     & 2016  & 128 & - \\
\hline
\end{tabular}
\label{tab:dataset}
\end{table}

The statistics of the used HINs are summarized in Table \ref{tab:dataset}. The notation * means the features are real. Otherwise, they are generated randomly. Note that, most existing methods only require features of the target objects, while ActiveHNE, GTN and our method need to input features of all types of objects. However, in the widely used HIN datasets, some types of objects have no available real features. For these objects, 
some existing methods input their one-hot ids as features, which results in a large number of parameters in the first layer and consequently, high space complexity and time complexity. Considering the general idea is to generate non-informative features for those objects without real features, in this paper,  we generate a 128-dimensional random vector for each of these objects from the Xavier uniform distribution \cite{xavier}. In this way, little information can be got from their features. For all the methods (except HAHE, which cannot make use of object input features), we input exactly the same object features as shown in Table \ref{tab:dataset}.

\textbullet\
\textbf{IMDB.}
We extract a subset from the IMDB dataset in HetRec 2011\footnote{\url{https://grouplens.org/datasets/hetrec-2011/}}, and construct an HIN which contains 4 object types: Movie ($M$), Actor ($A$), User ($U$) and Director ($D$), and 6 relations: $M \rightleftharpoons A$, $M \rightleftharpoons U$ and $M \rightleftharpoons D$. We select 14 (task-irrelevant features such as id and url are ignored) numerical and categorical features from the original features for movie objects. Movie ($M$) objects are labeled by 4 classes: comedy, documentary, drama, and horror.

\textbullet\
\textbf{ACM.}
The dataset is provided by the authors of HAN \cite{han}. It is downloaded from ACM digital library\footnote{\url{https://dl.acm.org/}} in 2010, including data from 14 representative computer science conferences. We construct an HIN with 3 object types: Paper ($P$), Author ($A$) and Subject ($S$), and 4 relations: $P \rightleftharpoons A$ and $P \rightleftharpoons S$. Paper ($P$) objects are labeled by 3 research areas: data mining, database and computer network, and their features are the TF-IDF representations of their titles.

\textbullet\
\textbf{DBLP.}
The dataset is provided by the authors of HAN \cite{han}, which is extracted from 4 research areas of DBLP bibliography\footnote{\url{https://dblp.org/}}. The 4 research areas are: data mining (DM), database (DB), artificial intelligence (AI) and information retrieval (IR)\footnote{DM: ICDM, KDD, PAKDD, PKDD, SDM; DB: SIGMOD, VLDB, PODS, EDBT, ICDE; AI: AAAI, CVPR, ECML, ICML, IJCAI; IR: ECIR, SIGIR, WWW, WSDM, CIKM.}. Based on the dataset, we construct an HIN with 4 types: Paper ($P$), Author ($A$), Conference ($C$) and Term ($T$), and 6 relations: $P \rightleftharpoons A$, $P \rightleftharpoons C$ and $P \rightleftharpoons T$. Author ($A$) objects are labeled with the four research areas according to the conferences where they published papers \cite{han}.
Although paper objects have titles as their features, the titles provide very similar information as the terms connected to papers. Hence, we do not incorporate them as real features for papers, so that we can answer an important research question: whether ie-HGCN can well exploit useful structural features conveyed by meta-paths to accomplish the task without informative object features.

\subsection{Baselines}
\label{sec:methods}
We compare ie-HGCN against three GCN methods for homogeneous graphs: GraphSAGE (GSAGE for short), GCN and GAT; one HIN embedding method: metapath2vec (MP2V for short); five GCN methods for HINs: HAN, HAHE, DHNE (HIN embedding component of ActiveHNE), HetSANN (HetSA for short) and GTN; one ie-HGCN variant: ie-HGCN$_{mean}$. Details are as follows.

\textbullet\
\textbf{GraphSAGE (GSAGE)} \cite{graphsage}:
It is a homogeneous method that learns a function to aggregate features from a node's neighborhood. We use the convolutional mean-based aggregator, which corresponds to a rough, linear approximation of localized spectral convolution.

\textbullet\
\textbf{GCN} \cite{gcn}:
It is the state-of-the-art graph convolutional method for homogeneous graphs.

\textbullet\
\textbf{GAT} \cite{gat}:
It is designed for homogeneous graphs. For each node, it aggregates neighbor representations via the importance scores learned by node-level attention.

\textbullet\
\textbf{metapath2vec (MP2V)} \cite{metapath2vec}:
It is the state-of-the-art HIN embedding method. It first performs random walks guided by user-specified meta-paths and then uses the heterogeneous skip-gram to learn object representations. It cannot learn the importance of these input meta-paths.

\textbullet\
\textbf{HAN} \cite{han}:
It transforms an HIN into several homogeneous graphs via given symmetric meta-paths and uses GAT to perform object-level aggregation. Then, by attention mechanism, it fuses object representations learned from different meta-path based graphs.

\textbullet\
\textbf{HAHE} \cite{hahe}:
It is similar to HAN, except that it initializes the features of the target objects as the meta-path based structural features. Thus, it cannot exploit object features.

\textbullet\
\textbf{DHNE}:
ActiveHNE \cite{activehne} is an active learning method. For a
fair comparison, we use its Discriminative Heterogeneous Network Embedding (DHNE) component. It only considers fixed-length meta-paths, and cannot learn the importance of meta-paths.

\textbullet\
\textbf{HetSANN (HetSA)} \cite{hetsann}:
It is a heterogeneous method which directly uses attention mechanism to aggregate heterogeneous neighbors. We use the variant HetSANN.M.R.V which achieves the best performance as reported. The attention is implemented by sparse operations.

\textbullet\
\textbf{GTN} \cite{gtn}:
It is a heterogeneous method which considers all possible by computing all possible meta-path based graphs, and then performs graph convolution on the resulting graphs.

\textbullet\
\textbf{ie-HGCN$_{mean}$}:
It is a variant of ie-HGCN. We replace the type-level attention with the element-wise mean function. We use this method to show the effectiveness of the type-level attention.

\begin{table*}
  \centering
  \tabcolsep=0.17cm
  \caption{Object Classification Results.}
    \renewcommand{\arraystretch}{1.3}
    \begin{tabular}{|c|c|c|c|c|c|c|c|c|c|c|c|c|c|}
    \hline
    Dataset & Metrics (\%) & Training & GSAGE & GCN & GAT & MP2V & HAHE & HAN & DHNE & HetSA & GTN & ie-HGCN$_{mean}$ & ie-HGCN \\ \hline
    \multirow{8}[0]{*}{DBLP} & \multirow{4}[0]{*}{Micro F1} 
          &    20\% & 88.82 & 91.55 & 90.97 & 90.15 & 93.57 & 92.24 & 84.45 & 93.36 & 93.41 & 93.68 & \textbf{94.26} \\
          &  & 40\% & 88.81 & 91.10 & 91.20 & 90.81 & 93.61 & 92.40 & 84.61 & 93.72 & 93.84 & 93.55 & \textbf{94.22} \\
          &  & 60\% & 88.68 & 90.48 & 90.80 & 89.82 & 93.65 & 92.80 & 86.77 & 93.85 & 94.01 & 94.48 & \textbf{95.54} \\
          &  & 80\% & 88.87 & 91.72 & 91.73 & 90.89 & 94.38 & 93.08 & 87.36 & 94.03 & 94.46 & 95.20 & \textbf{96.48} \\
          \cline{2-14}
          & \multirow{4}[0]{*}{Macro F1} 
          &    20\% & 87.87 & 90.60 & 91.96 & 90.43 & 93.11 & 93.11 & 83.99 & 92.28 & 92.82 & 93.21 & \textbf{93.85} \\
          &  & 40\% & 87.98 & 90.17 & 92.16 & 89.73 & 93.78 & 93.30 & 84.80 & 92.51 & 93.34 & 93.05 & \textbf{93.83} \\
          &  & 60\% & 88.05 & 89.73 & 91.84 & 90.48 & 93.45 & 93.70 & 86.24 & 93.17 & 93.53 & 94.00 & \textbf{95.25} \\
          &  & 80\% & 88.29 & 90.99 & 92.55 & 90.97 & 94.24 & 93.99 & 86.82 & 93.48 & 93.77 & 94.72 & \textbf{96.29} \\
    \hline
    \multirow{8}[0]{*}{ACM} & \multirow{4}[0]{*}{Micro F1} 
          &    20\% & 81.47 & 78.80 & 74.18 & 66.74 & 77.17 & 73.58 & 76.21 & 78.57 & 77.85 & 78.73 & \textbf{81.93} \\
          &  & 40\% & 80.86 & 78.64 & 72.01 & 69.01 & 78.19 & 77.44 & 78.41 & 78.62 & 78.84 & 80.23 & \textbf{82.10} \\
          &  & 60\% & 80.31 & 77.78 & 76.18 & 71.68 & 78.09 & 76.47 & 78.97 & 79.25 & 79.27 & 83.26 & \textbf{83.73} \\
          &  & 80\% & 81.12 & 79.75 & 77.20 & 73.27 & 80.86 & 76.13 & 79.02 & 81.13 & 79.64 & 83.96 & \textbf{84.22} \\
          \cline{2-14}
          & \multirow{4}[0]{*}{Macro F1} 
          &    20\% & 63.40 & 60.19 & 58.18 & 50.92 & 53.87 & 64.69 & 64.94 & 57.89 & 51.34 & 68.95 & \textbf{69.79} \\
          &  & 40\% & 62.35 & 59.12 & 61.14 & 51.91 & 54.82 & 64.39 & 65.67 & 58.80 & 53.65 & 69.17 & \textbf{69.31} \\
          &  & 60\% & 60.38 & 58.80 & 53.79 & 51.87 & 54.65 & 65.31 & 65.99 & 59.20 & 55.36 & 69.43 & \textbf{70.25} \\
          &  & 80\% & 59.60 & 60.25 & 59.36 & 54.81 & 58.84 & 66.26 & 67.61 & 61.08 & 56.28 & 69.36 & \textbf{69.42} \\
    \hline
    \multirow{8}[0]{*}{IMDB} & \multirow{4}[0]{*}{Micro F1} 
             & 20\% & 58.20 & 59.58 & 55.30 & 49.87 & 54.89 & 56.47 & 61.69 & 61.07 &   -   & 62.12 & \textbf{64.94} \\
          &  & 40\% & 57.11 & 58.49 & 55.42 & 50.14 & 55.00 & 56.03 & 63.03 & 61.34 &   -   & 66.61 & \textbf{66.70} \\
          &  & 60\% & 59.89 & 59.81 & 55.14 & 50.83 & 55.01 & 57.00 & 64.16 & 62.21 &   -   & 66.44 & \textbf{68.22} \\
          &  & 80\% & 58.27 & 58.73 & 54.06 & 50.90 & 54.64 & 57.18 & 64.85 & 63.25 &   -   & 69.04 & \textbf{69.71} \\
          \cline{2-14}
          & \multirow{4}[0]{*}{Macro F1} 
          &    20\% & 40.93 & 29.26 & 30.07 & 20.95 & 21.47 & 45.87 & 51.27 & 50.41 &   -   & 54.19 & \textbf{56.60} \\
          &  & 40\% & 42.68 & 29.15 & 31.26 & 21.36 & 23.52 & 44.44 & 56.22 & 54.52 &   -   & 57.61 & \textbf{59.81} \\
          &  & 60\% & 39.64 & 30.95 & 29.82 & 20.89 & 21.72 & 44.82 & 55.95 & 54.37 &   -   & 55.47 & \textbf{60.84} \\
          &  & 80\% & 39.22 & 30.36 & 28.93 & 22.11 & 25.75 & 44.77 & 54.92 & 53.06 &   -   & 52.45 & \textbf{58.35} \\
    \hline
    \end{tabular}
  \label{tab:classify} 
\end{table*}

\subsection{Hyper-parameter Settings}
\label{sec:settings}
On each dataset, we randomly select $x$\% objects as training set, and the rest $(1-x)$\% are divided equally as validation set and test set, where $x \in \{20, 40, 60, 80\}$. For all the methods, we use exactly the same training/validation/test sets for fairness.
For all the methods, we search their respective hyper-parameters on the validation set. Specifically, we only investigate their proper hyper-parameter settings on the validation set of DBLP and use the same settings for ACM and IMDB. This can reflect whether the hyper-parameter setting is sensitive w.r.t. datasets. The hyper-parameter settings of all the methods are detailed as follows.

\textbullet\
\textbf{Ours}:
To make model tuning easy, we set the same hidden representation dimensionality for all the object types in a layer. Specifically, we set the number of layers to 5. The first layer is the input layer, and its dimensionalities for different objects are determined by object features. For the other 4 hidden layers, the dimensionalities are all set to [64, 32, 16, 8]. The nonlinearity $\sigma$ is set to ELU function \cite{elu_nonlinear}. The hidden layer dimensionality ${d_a}$ of the type-level attention is set to 64. For optimization, we use Adam optimizer with a learning rate of 0.01, and the parameters are initialized by Xavier uniform distribution \cite{xavier}. We apply dropout to the output of each layer except the output layer, with dropout rate 0.5. The $l_2$ regularization weight is set to 5e-4. For a fair comparison, our ie-HGCN and ie-HGCN$_{mean}$ use the same hyper-parameter setting.

\textbullet\
\textbf{Baselines}:
Since GSAGE, GCN, GAT, MP2V, HAN and HAHE need user-specified meta-paths, we use the meta-paths used in the papers \cite{han,hahe}. Concretely, on DBLP, we use $APA$, $APTPA$ and $APCPA$. On ACM, we use $PAP$ and $PSP$. On IDMB, we use $MAM$, $MUM$ and $MDM$. For GSAGE, GCN, GAT and MP2V, we test them on homogeneous graphs constructed by the above meta-paths and report their best results. For all the baselines, we use the validation set of DBLP to tune hyper-parameters starting from their default settings. Their key hyper-parameters are set as follows. For GSAGE, the neighborhood sample size is set to 5. For GAT, HAN and HetSA, the number of attention head is set to 8. For HAHE, the batch size is set to 512, and the sample size of neighbors is set to 100. For DHNE, its number of layers is set to 3. For GTN, the number of channels set to 2. Its number of layers is set to 3. For MP2V, the window size and the negative sample size are set to 5, and the walk length is set to 100.

\subsection{Object Classification}
\label{sec:classify}
We conduct object classification to compare the performance of all the methods. Each method is randomly run 10 times, and the average Micro F1 and Macro F1 are reported in Table \ref{tab:classify}. Note that due to the high space complexity of GTN, it cannot make use of GPU on our datasets due to out of 12GB GPU memory. Therefore in this experiment, we run GTN on CPUs with 128GB main memory, as suggested by the authors \cite{gtn}. Even then, it runs out of 128GB main memory on IMDB. We can see, ie-HGCN achieves the best overall performance, and ie-HGCN$_{mean}$ outperforms the other baselines in most cases, which indicates the effectiveness of our proposed heterogeneous graph convolution for object-level aggregation. On the other hand, ie-HGCN performs better than ie-HGCN$_{mean}$, which shows the effectiveness that our proposed type-level attention can discover and exploit the most useful meta-paths for this task.

On DBLP, heterogeneous methods HAN and HAHE significantly outperform homogeneous methods GSAGE, GCN and GAT, while on ACM and IMDB, the former does not have much superiority than the latter. This may be because on DBLP, the heterogeneous structural features conveyed by meta-paths are more helpful for this task (see Section \ref{sec:attnstudy}), while on ACM and IMDB, the real features of the target objects are more helpful. Our ie-HGCN always achieves the best results on all the datasets, which indicates that it can not only exploit useful structural features but also take advantage of useful object features.
On DBLP, we can see DHNE performs much worse than HAN, HAHE, HetSA and GTN, because it only exploits fixed-length meta-paths and cannot learn their importance. Even homogeneous methods GSAGE, GCN and GAT perform better than DHNE, as they can exploit useful meta-paths that previous researchers have empirically chosen. 
On DBLP, HetSA and GTN perform better than DHNE, which may be because they can exploit all possible meta-paths. GTN performs better than HetSA, which may be because that GTN can correctly discover and exploit useful meta-paths for this task while HetSA cannot. However, GTN performs worse than ie-HGCN, the reason of which should be that it is not flexible enough to capture the complexity of different objects. MP2V performs worst in most cases, which indicates the superiority of graph convolutional methods over traditional network embedding methods.

\subsection{Interpretability Study}
\label{sec:attnstudy}

\begin{table}
\caption{Useful Meta-paths Discovered by ie-HGCN on DBLP.}
\centering
\renewcommand{\arraystretch}{1.3}

\subtable[Mean Attention Coefficients.]{    
\begin{tabular}{|c|c|c|c|}
\hline
Layers & $\Omega$ & $\mathcal{N}_\Omega$ & Coefficients \\
\hline
\multirow{4}[0]{*}{1-2}  & $P$     & [$P^{Self}$, $A$, $C$, $T$] & [0.06, 0.06, 0.82, 0.06] \\
    \cline{2-4}          & $A$     & [$A^{Self}$, $P$] & [0.50, 0.50] \\
    \cline{2-4}          & $C$     & [$C^{Self}$, $P$] & [0.63, 0.37] \\
    \cline{2-4}          & $T$     & [$T^{Self}$, $P$] & [0.50, 0.50] \\
\hline
\multirow{4}[0]{*}{2-3}  & $P$     & [$P^{Self}$, $A$, $C$, $T$] & [0.64, 0.04, 0.27, 0.05] \\
    \cline{2-4}          & $A$     & [$A^{Self}$, $P$] & [0.20, 0.80] \\
    \cline{2-4}          & $C$     & [$C^{Self}$, $P$] & [0.37, 0.63] \\
    \cline{2-4}          & $T$     & [$T^{Self}$, $P$] & [0.06, 0.94] \\
\hline
\multirow{4}[0]{*}{3-4}  & $P$     & [$P^{Self}$, $A$, $C$, $T$] & [0.25, 0.25, 0.25, 0.25] \\
    \cline{2-4}          & $A$     & [$A^{Self}$, $P$] & [0.49, 0.51] \\
    \cline{2-4}          & $C$     & [$C^{Self}$, $P$] & [0.42, 0.58] \\
    \cline{2-4}          & $T$     & [$T^{Self}$, $P$] & [0.19, 0.81] \\
\hline
4-5                      & $A$     & [$A^{Self}$, $P$] & [0.43, 0.57] \\
\hline
\end{tabular}
\label{tab:mean-attn-coeff}
}

\subtable[Importance Scores.]{ 
\begin{tabular}{|c|c|c|}
\hline
Meta-paths & Merged Paths & Importance Scores \\ \hline
\multirow{6}[0]{*}{$CPA$} & $CP-P-PA$ & 0.82 * 0.64 * 0.25 * 0.57 = 0.0748 \\
\cline{2-3} & $C-CP-PA$ & 0.63 * 0.27 * 0.25 * 0.57 = 0.0242 \\
\cline{2-3} & $C-C-CPA$ & 0.63 * 0.37 * 0.25 * 0.57 = 0.0332 \\
\cline{2-3} & $C-CPA-A$ & 0.63 * 0.27 * 0.51 * 0.43 = 0.0373 \\
\cline{2-3} & $CP-PA-A$ & 0.82 * 0.64 * 0.51 * 0.43 = 0.1151 \\
\cline{2-3} & $CPA-A-A$ & 0.82 * 0.80 * 0.49 * 0.43 = 0.1382 \\ \hline
$CPTPA$ & $CPTPA$   & 0.82 * 0.94 * 0.25 * 0.57 = 0.1098 \\ \hline
$CPAPA$ & $CPAPA$   & 0.82 * 0.80 * 0.25 * 0.57 = 0.0935 \\ \hline
$CPCPA$ & $CPCPA$   & 0.82 * 0.63 * 0.25 * 0.57 = 0.0736 \\ \hline
\end{tabular}
\label{tab:path-scores}
}

\label{tab:dblp-metapath}
\end{table}

One salient feature of ie-HGCN is the ability to evaluate all possible meta-paths, discover and exploit the most useful ones for a specific task. We provide experimental evidences in this subsection. 
Firstly, we show the most useful meta-paths in a global sense, which are discovered by ie-HGCN for the task of classifying author objects in DBLP. Specifically, we first compute the mean attention distribution in each block, which is shown in Table \ref{tab:mean-attn-coeff}. Then we compute the importance scores of meta-paths based on these mean distributions. Table \ref{tab:path-scores} shows the computation details. Here, note that we need to merge equivalent paths that stand for the same meta-path. For object type $\Omega$, we use $\Omega - \Omega$ to denote the dummy self-relation, and use $\Omega\Omega$ to denote the real self-relation. Thus, the top-6 paths in Table \ref{tab:path-scores} are all equivalent to $CPA$. The importance score of $CPA$ is the sum of the scores of the top-6 paths, i.e., 0.4228.
It indicates that $CPA$ is the most useful meta-path for the task of author object classification. This is reasonable. The semantic meaning of $CPA$ is ``the conferences where authors have published papers''. This correctly reflects the fact that in the DBLP dataset, the class of an author (i.e. his/her research area) is labeled according to the conferences where he/she have published papers \cite{han}. Besides, $CPAPA$ and $CPTPA$ are also useful for the task. $CPAPA$ indicates that in addition to the conferences where an author himself/herself has published papers, the conferences where his/her coauthors have published papers are also useful. $CPTPA$ suggests we should further consider conferences where the published papers share a lot of common terms with those written by the author. The last meta-path $CPCPA$ is also intuitive. Notice that a paper can only be published in one conference. The meta-path $CPC$ essentially does not introduce information of other conferences to a conference. Hence, we can interpret $CPCPA$ as $CPA$.

Regarding the best meta-path for each object, we also find the results are intuitive. For example, ie-HGCN correctly classifies Yoshua Bengio as ``AI'' (see Section \ref{sec:data} for details of labels), and assigns the highest score to $CPCPA$ for him. This is intuitive since all the 7 papers connected to him in our dataset are from ``AI'' conferences such as ICML. On the other hand, ie-HGCN correctly classifies the scholar Chen Chen as ``DM'', and assigns the highest score to $CPAPA$. This is also reasonable. In our dataset, he has published 3 papers in ``DB'' conferences and 2 papers in ``DM'' conferences, but all the 5 papers are co-authored with Jiawei Han, who has published many papers in ``DM'' conferences such as KDD. These observations indicate that ie-HGCN is able to evaluate the importance of meta-paths according to the information of different objects. 

Regarding baselines, HAN and HAHE assign the largest attention coefficient to meta-path $APCPA$ \cite{han,hahe}. GSAGE, GCN, GAT and MP2V also achieve the best classification results when their input meta-path is $APCPA$. This meta-path means that we should resort to authors who have published papers in the same conferences as the target author. However, this is less effective since it indirectly exploits the conference information. These methods cannot directly exploit the most useful meta-path $CPA$, because they can only perform homogeneous graph convolution which requires constructing homogeneous graphs by symmetric meta-paths.

Our ie-HGCN discovers useful meta-paths of $PSP$ and $PAP$ on ACM, and $MUM$, $MAM$ and $UM$ on IMDB, for their respective object classification tasks. Most of them are widely used in previous works. It also indicates the effectiveness of real features of target objects, since the target objects are typically connected to their input features.

\begin{figure}
  \centering
  \subfigure[Scalability]{\includegraphics[width=0.49\columnwidth]{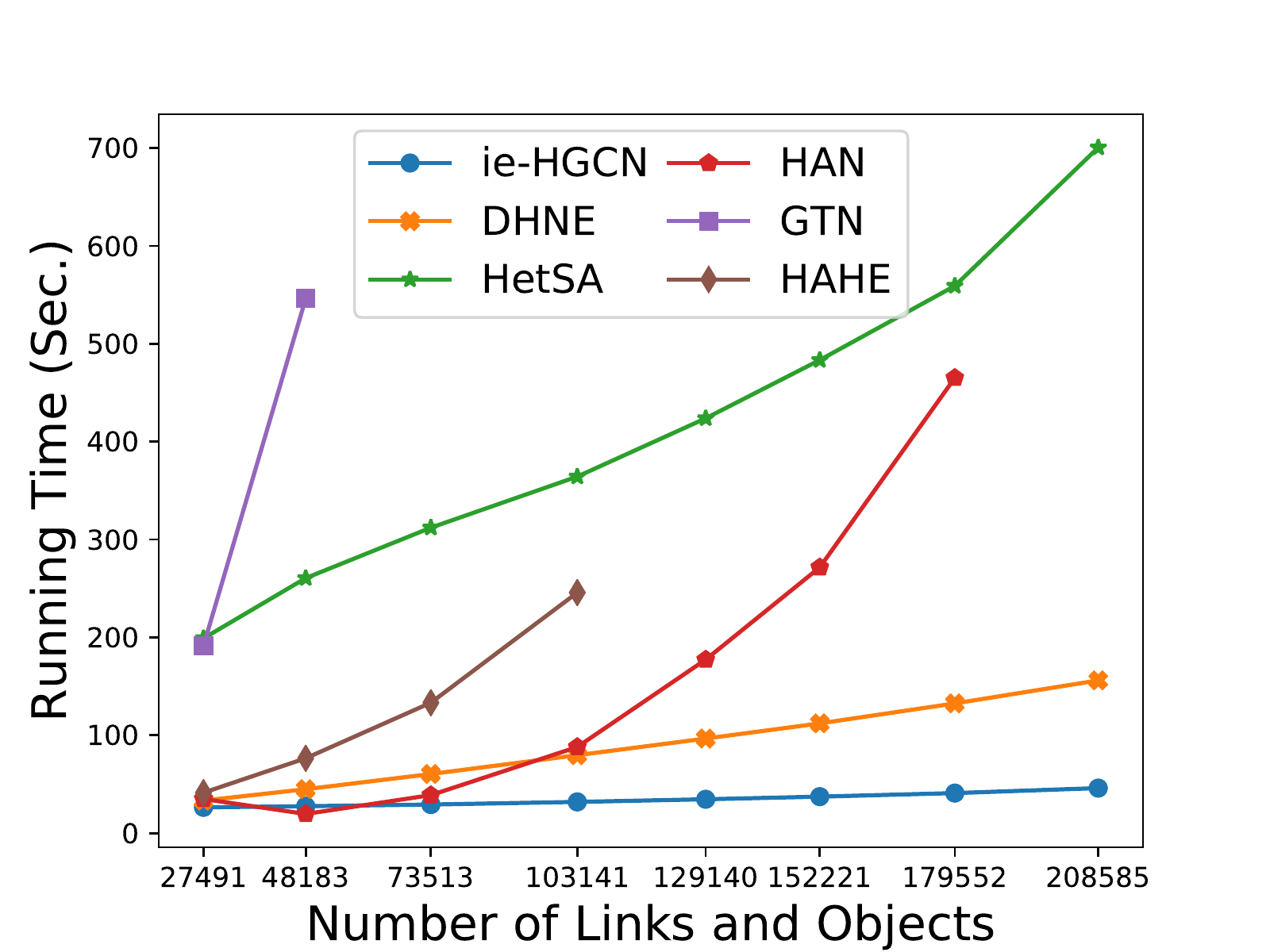}\label{fig:scalability}}
  \subfigure[Depth]{\includegraphics[width=0.49\columnwidth]{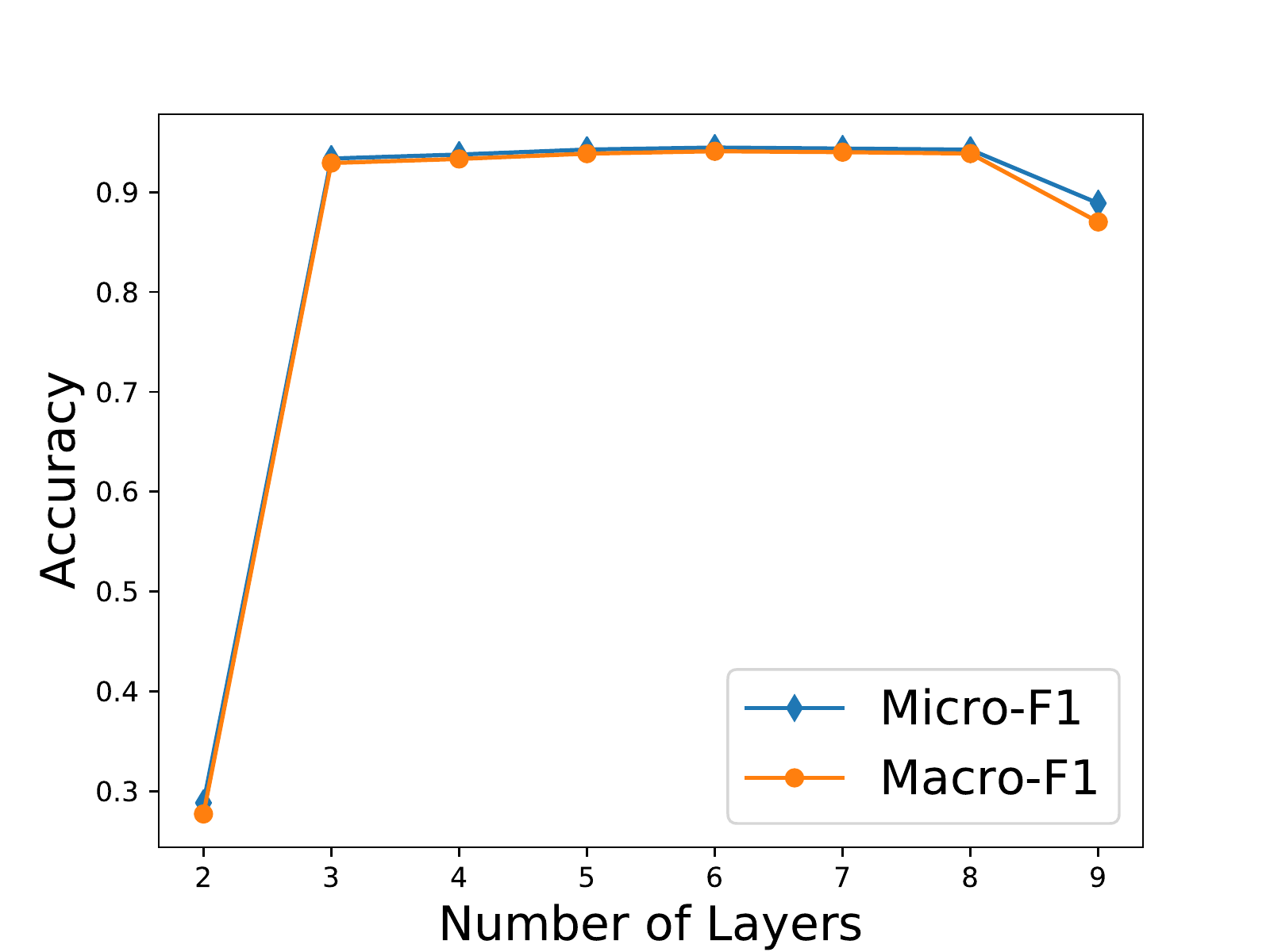}\label{fig:depth}}
  \caption{
  (a): Running time (seconds) of all the heterogeneous GCN methods w.r.t. the total number of links and objects;
  (b): Classification performance of ie-HGCN w.r.t. its number of layers.
  }
  \label{fig:scala-depth}
\end{figure}



\begin{figure*}
  \centering
  \subfigure[IMDB]{\includegraphics[width=0.67\columnwidth]{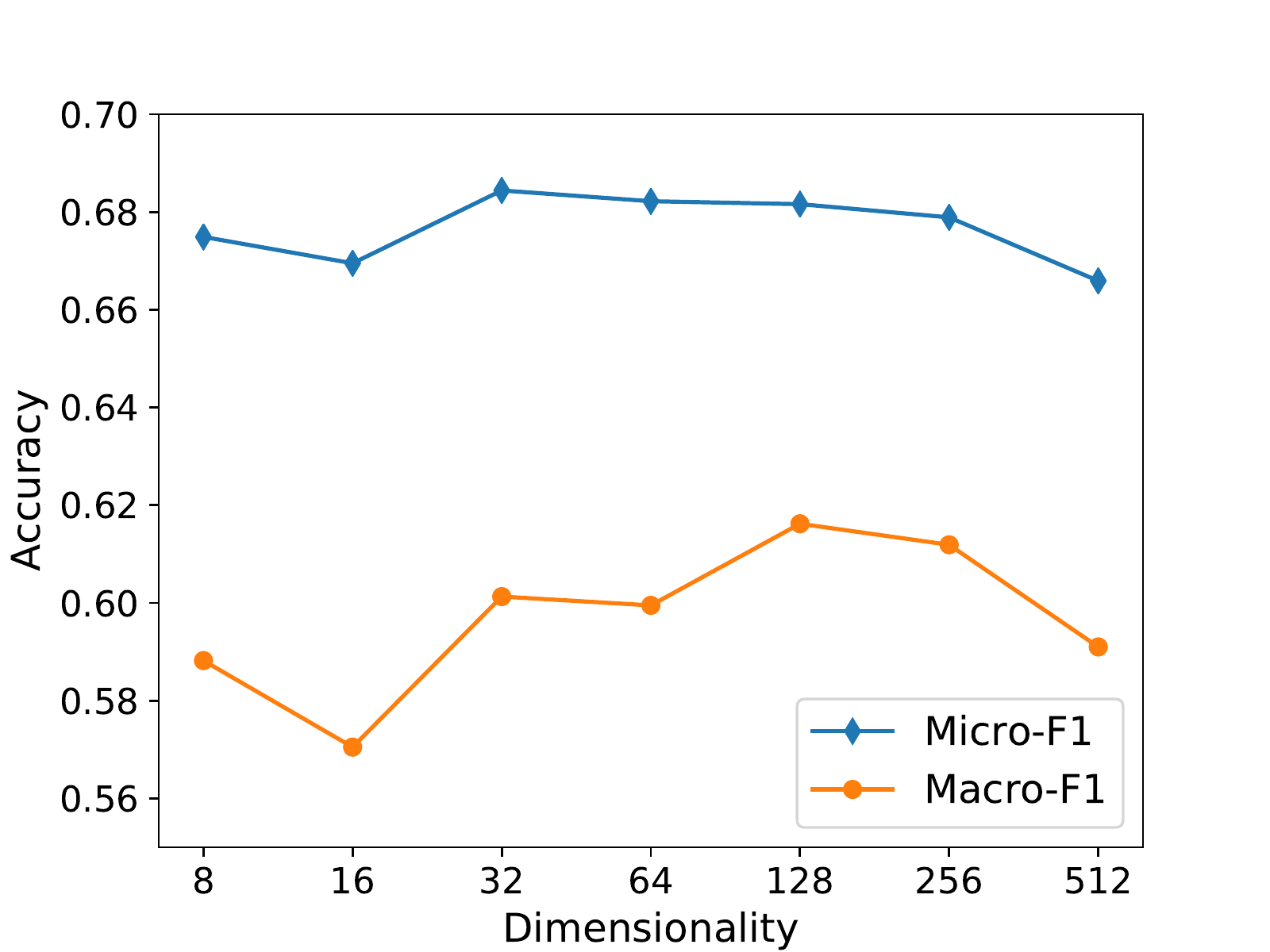}\label{fig:param-imdb}}
  \subfigure[ACM]{\includegraphics[width=0.67\columnwidth]{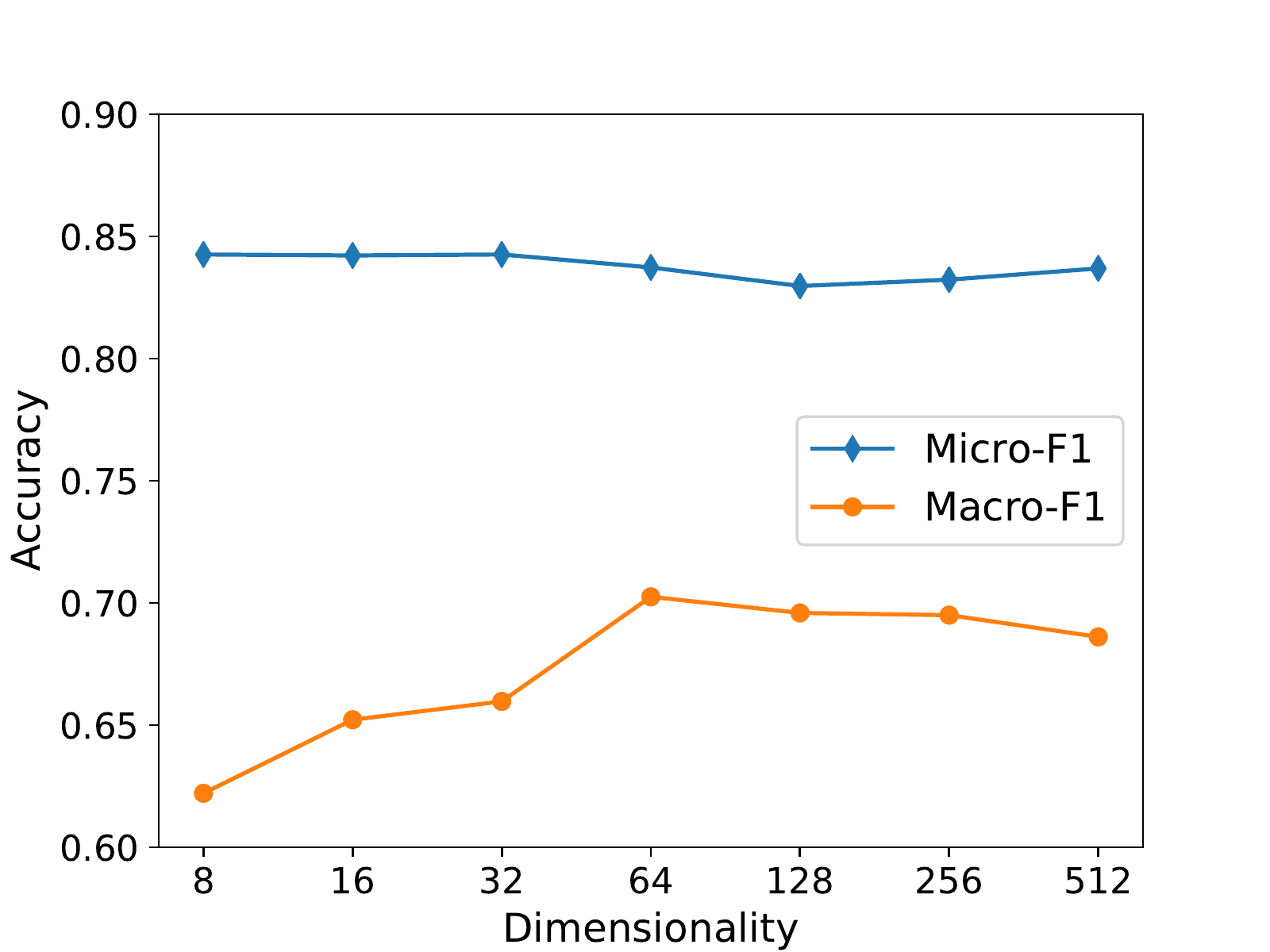}\label{fig:param-acm}}
  \subfigure[DBLP]{\includegraphics[width=0.67\columnwidth]{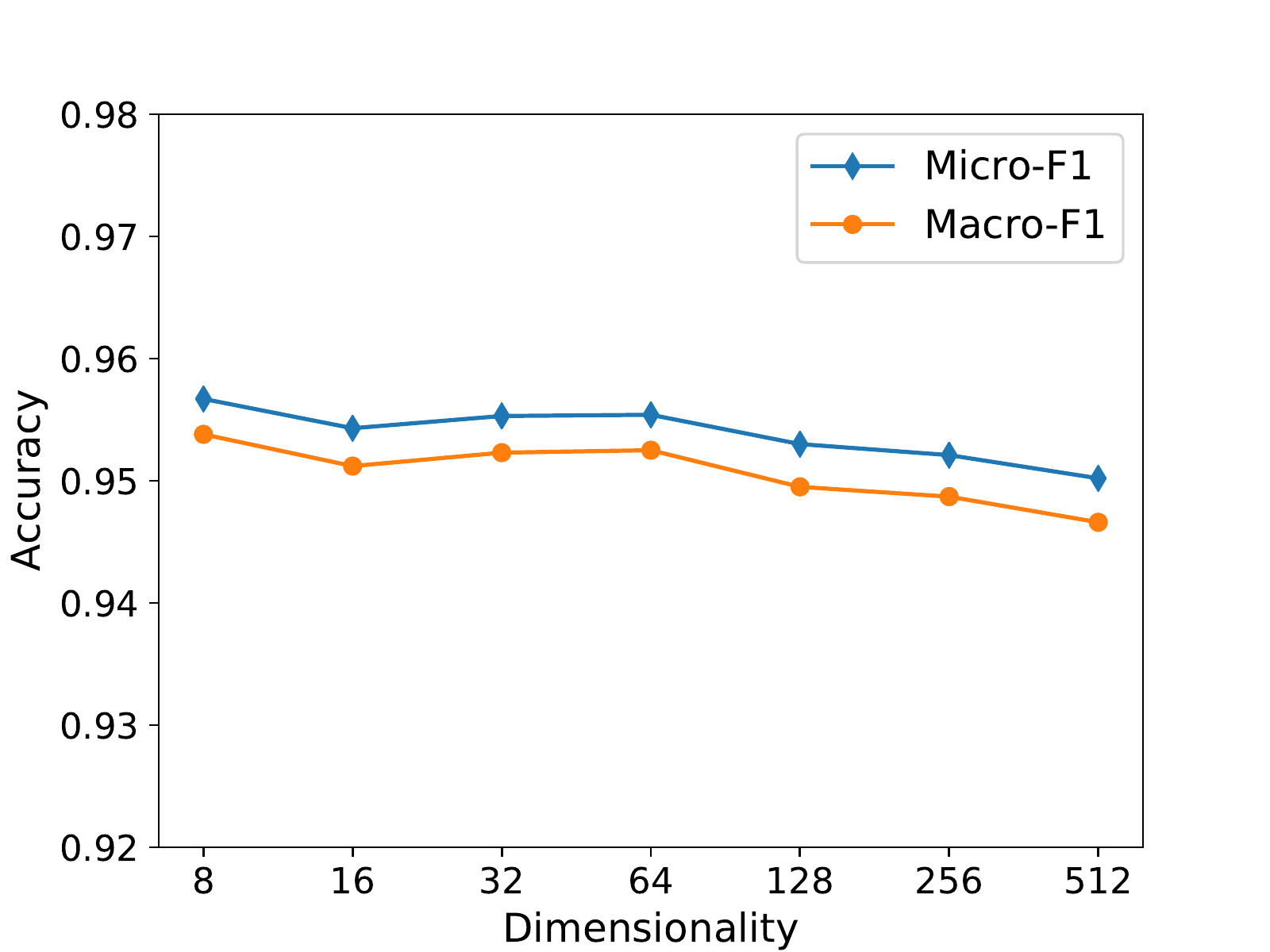}\label{fig:param-dblp}}
  \caption{Classification performance of ie-HGCN w.r.t. the hidden layer dimensionality ${d_a}$ of the type-level attention.}
  \label{fig:param}
\end{figure*}

\subsection{Scalability}
\label{sec:scala}
Based on the original DBLP dataset provided by HAN \cite{han}, we use different numbers of author objects to induce 8 HINs with different scales. Let us denote the scale by a tuple: (author numbers, total objects, total links), i.e. ($|{\mathcal V}^{A}|$, $|{\mathcal V}|$, $|{\mathcal E}|$). Thus, from small to large, the scales of resulting HINs are: (800, 6183, 21308), (1500, 9799, 38384), (2500, 13935, 59578), (4000, 18785, 84356), (5500, 22969, 106171), (7000, 26327, 125894), (10000, 31775, 147777), (14475, 37791, 170794).

We test and compare the scalability of the heterogeneous GCN methods on these constructed HINs with different scales. All the methods are randomly run 10 times on GPU. The average running time (seconds) w.r.t. the total number of links+objects in the HINs is reported in Figure \ref{fig:scalability}. We can see, ie-HGCN achieves the best scalability. The time cost of ie-HGCN increases linearly w.r.t. the HIN scale, which helps to verify the time complexity analyzed in Section \ref{sec:analys-time-complexity}. 
HAN, HAHE and HetSA perform worse than DHNE and ie-HGCN because the former need to perform object-level attention, which is computationally inefficient in practice. The time cost of HAN and HAHE increases more dramatically than that of HetSA because the former need to construct meta-path based graphs which are very dense, and the latter implements the object-level attention by sparse operations. HAHE performs worse than HAN, since it uses the high dimensional meta-path based structural features as input features. DHNE is less efficient than our ie-HGCN, which should be because that its hidden representation concatenation requires significantly more parameters to obtain the output representations of each layer than our type-level attention. GTN shows the worst scalability due to its square time complexity. HAHE, HAN and GTN cannot run on large-scale HINs due to out of 12 GB GPU memory.

\subsection{Depth Study}
\label{sec:depth}
For ie-HGCN, the more layers, the more complex and rich semantics can be captured. We implement 8 instances of ie-HGCN with layers increasing from 2 to 9. Specifically, fixing other hyper-parameters, we set the layer (except for input layer) dimensionalities of these model instances as: [64], [64, 32], [64, 32, 16], [64, 32, 16, 8], [64, 64, 32, 16, 8], [64, 64, 64, 32, 16, 8], [64, 64, 64, 64, 32, 16, 8], [64, 64, 64, 64, 64, 32, 16, 8].

We test their classification performance on DBLP. Each of these model instances is randomly run 10 times, and the average Micro F1 and Macro F1 scores are reported in Figure \ref{fig:depth}. We can see, when the model has 2 layers, the performance is very poor. This is not surprising, since the ie-HGCN model with 2 layers can only consider meta-paths with length less than 2. As discussed in Section \ref{sec:attnstudy}, in order to accurately classify author objects on DBLP, it is critical to capture and fuse the information from related conference objects. However, there is no 1-hop meta-path between authors and conferences in the network schema of DBLP (Figure~\ref{fig:toy-dblp}). When the depth becomes 3, the performance is promoted dramatically, since it is possible for ie-HGCN to exploit meta-path $CPA$. Then, the performance grows slightly as the depth increases until it achieves its best when the depth is 6. After that, the performance starts to decrease possibly due to overfitting.

\subsection{Hyper-parameter Study}
\label{sec:param}
In this subsection, we investigate the sensitivity of ie-HGCN's performance to the hidden layer dimensionality ${d_a}$ of the type-level attention. With other hyper-parameters fixed, we gradually increase ${d_a}$ from 8 to 512 and report Micro F1 and Macro F1 in Figure \ref{fig:param}. We can see, on DBLP, the performance is not very sensitive to ${d_a}$. On IMDB and ACM, Micro F1 is not very sensitive, while Macro F1 is more sensitive. Considering that Macro F1 is sensitive to skewed classes in classification, this can be explained by the fact that the classes in IMDB and ACM are skewed, while those in DBLP are balanced. The general pattern is: in the beginning, the performance grows as the dimensionality gradually increases; then the performance begins to decline, which should be because of overfitting with more parameters in the attention module. The overall inflection point is at the dimensionality of 64. Thus, we set ${d_a}=64$ for ie-HGCN.

\section{Conclusion}
\label{sec:conclu}
In this paper, we propose ie-HGCN to learn representations of objects in an HIN. To address the heterogeneity, we first project the representations of different types of neighbor objects into a common semantic space. Then we use row-normalized adjacency matrices to perform the object-level aggregation. We formally show that the first two steps intrinsically perform heterogeneous spectral graph convolution on HINs. Finally, we use the proposed type-level attention to aggregate the convolved representations of different types of neighbor objects. Our ie-HGCN automatically evaluates all possible meta-paths in an HIN, discovers and exploits the most useful meta-paths for a specific task, which brings good interpretability of the model. The theoretical analysis and the scalability experiment show that it is efficient. Extensive experiments show ie-HGCN outperforms several state-of-the-art methods.

\section*{Acknowledgments}
The authors would like to thank the authors of HAN \cite{han} and HAHE \cite{hahe} for their source codes and raw datasets.
This research was supported by the National Natural Science Foundation of China (Grant Nos. 61672409, 61936006, 61876144, 61876145), the Key Research and Development Program of Shaanxi (Program No.2020ZDLGY04-07), Shaanxi Province Science Fund for Distinguished Young Scholars (Grant No. 2018JC-016) and the Fundamental Research Funds for the Central Universities (Grant Nos. JB190301, JB190305).

\ifCLASSOPTIONcaptionsoff
  \newpage
\fi

\bibliographystyle{IEEEtran}
\bibliography{hgcn}

\begin{thebibliography}{10}
\providecommand{\url}[1]{#1}
\csname url@samestyle\endcsname
\providecommand{\newblock}{\relax}
\providecommand{\bibinfo}[2]{#2}
\providecommand{\BIBentrySTDinterwordspacing}{\spaceskip=0pt\relax}
\providecommand{\BIBentryALTinterwordstretchfactor}{4}
\providecommand{\BIBentryALTinterwordspacing}{\spaceskip=\fontdimen2\font plus
\BIBentryALTinterwordstretchfactor\fontdimen3\font minus
  \fontdimen4\font\relax}
\providecommand{\BIBforeignlanguage}[2]{{%
\expandafter\ifx\csname l@#1\endcsname\relax
\typeout{** WARNING: IEEEtran.bst: No hyphenation pattern has been}%
\typeout{** loaded for the language `#1'. Using the pattern for}%
\typeout{** the default language instead.}%
\else
\language=\csname l@#1\endcsname
\fi
#2}}
\providecommand{\BIBdecl}{\relax}
\BIBdecl

\bibitem{pathsim}
Y.~Sun, J.~Han, X.~Yan, P.~S. Yu, and T.~Wu, ``Pathsim: Meta path-based top-k
  similarity search in heterogeneous information networks,'' \emph{VLDB
  Endowment}, vol.~4, no.~11, pp. 992--1003, 2011.

\bibitem{yang-hin-survey}
C.~Yang, Y.~Xiao, Y.~Zhang, Y.~Sun, and J.~Han, ``Heterogeneous network
  representation learning: Survey, benchmark, evaluation, and beyond,''
  \emph{arXiv:2004.00216}, 2020.

\bibitem{heer}
Y.~Shi, Q.~Zhu, F.~Guo, C.~Zhang, and J.~Han, ``Easing embedding learning by
  comprehensive transcription of heterogeneous information networks,'' in
  \emph{SIGKDD}.\hskip 1em plus 0.5em minus 0.4em\relax ACM, 2018, pp.
  2190--2199.

\bibitem{pte}
J.~Tang, M.~Qu, and Q.~Mei, ``Pte: Predictive text embedding through
  large-scale heterogeneous text networks,'' in \emph{SIGKDD}.\hskip 1em plus
  0.5em minus 0.4em\relax ACM, 2015, pp. 1165--1174.

\bibitem{eoe}
L.~Xu, X.~Wei, J.~Cao, and P.~S. Yu, ``Embedding of embedding (eoe): Joint
  embedding for coupled heterogeneous networks,'' in \emph{WSDM}.\hskip 1em
  plus 0.5em minus 0.4em\relax ACM, 2017, pp. 741--749.

\bibitem{metapath2vec}
Y.~Dong, N.~V. Chawla, and A.~Swami, ``metapath2vec: Scalable representation
  learning for heterogeneous networks,'' in \emph{SIGKDD}.\hskip 1em plus 0.5em
  minus 0.4em\relax ACM, 2017, pp. 135--144.

\bibitem{shne}
C.~Zhang, A.~Swami, and N.~V. Chawla, ``Shne: Representation learning for
  semantic-associated heterogeneous networks,'' in \emph{WSDM}.\hskip 1em plus
  0.5em minus 0.4em\relax ACM, 2019, pp. 690--698.

\bibitem{esim}
J.~Shancg, M.~Qu, J.~Liu, L.~M. Kaplan, J.~Han, and J.~Peng, ``Meta-path guided
  embedding for similarity search in large-scale heterogeneous information
  networks,'' \emph{arXiv:1610.09769}, 2016.

\bibitem{herec}
C.~Shi, B.~Hu, W.~X. Zhao, and S.~Y. Philip, ``Heterogeneous information
  network embedding for recommendation,'' \emph{TKDE}, vol.~31, no.~2, pp.
  357--370, 2018.

\bibitem{mcrec}
B.~Hu, C.~Shi, W.~X. Zhao, and P.~S. Yu, ``Leveraging meta-path based context
  for top-n recommendation with a neural co-attention model,'' in
  \emph{SIGKDD}.\hskip 1em plus 0.5em minus 0.4em\relax ACM, 2018, pp.
  1531--1540.

\bibitem{hine}
Z.~Huang and N.~Mamoulis, ``Heterogeneous information network embedding for
  meta path based proximity,'' \emph{arXiv:1701.05291}, 2017.

\bibitem{hin2vec}
T.-y. Fu, W.-C. Lee, and Z.~Lei, ``Hin2vec: Explore meta-paths in heterogeneous
  information networks for representation learning,'' in \emph{CIKM}.\hskip 1em
  plus 0.5em minus 0.4em\relax ACM, 2017, pp. 1797--1806.

\bibitem{han}
X.~Wang, H.~Ji, C.~Shi, B.~Wang, Y.~Ye, P.~Cui, and P.~S. Yu, ``Heterogeneous
  graph attention network,'' in \emph{The World Wide Web Conference}, 2019, pp.
  2022--2032.

\bibitem{hahe}
S.~Zhou, J.~Bu, X.~Wang, J.~Chen, and C.~Wang, ``Hahe: Hierarchical attentive
  heterogeneous information network embedding,'' 2019.

\bibitem{deephgnn}
S.~Wang, Z.~Chen, D.~Li, Z.~Li, L.-A. Tang, J.~Ni, J.~Rhee, H.~Chen, and P.~S.
  Yu, ``Attentional heterogeneous graph neural network: Application to program
  reidentification,'' in \emph{SDM}.\hskip 1em plus 0.5em minus 0.4em\relax
  SIAM, 2019, pp. 693--701.

\bibitem{magnn}
X.~Fu, J.~Zhang, Z.~Meng, and I.~King, ``Magnn: Metapath aggregated graph
  neural network for heterogeneous graph embedding,'' in \emph{Proceedings of
  The Web Conference 2020}, 2020, pp. 2331--2341.

\bibitem{graphinception}
Y.~Zhang, Y.~Xiong, X.~Kong, S.~Li, J.~Mi, and Y.~Zhu, ``Deep collective
  classification in heterogeneous information networks,'' in \emph{WWW}.\hskip
  1em plus 0.5em minus 0.4em\relax International World Wide Web Conferences
  Steering Committee, 2018, pp. 399--408.

\bibitem{activehne}
X.~Chen, G.~Yu, J.~Wang, C.~Domeniconi, Z.~Li, and X.~Zhang, ``Activehne:
  Active heterogeneous network embedding,'' \emph{arXiv:1905.05659}, 2019.

\bibitem{hetgnn}
C.~Zhang, D.~Song, C.~Huang, A.~Swami, and N.~V. Chawla, ``Heterogeneous graph
  neural network,'' in \emph{SIGKDD}.\hskip 1em plus 0.5em minus 0.4em\relax
  ACM, 2019, pp. 793--803.

\bibitem{r-gcn}
M.~Schlichtkrull, T.~N. Kipf, P.~Bloem, R.~Van Den~Berg, I.~Titov, and
  M.~Welling, ``Modeling relational data with graph convolutional networks,''
  in \emph{European Semantic Web Conference}.\hskip 1em plus 0.5em minus
  0.4em\relax Springer, 2018, pp. 593--607.

\bibitem{decagon}
M.~Zitnik, M.~Agrawal, and J.~Leskovec, ``Modeling polypharmacy side effects
  with graph convolutional networks,'' \emph{Bioinformatics}, vol.~34, no.~13,
  pp. i457--i466, 2018.

\bibitem{gtn}
S.~Yun, M.~Jeong, R.~Kim, J.~Kang, and H.~J. Kim, ``Graph transformer
  networks,'' in \emph{NeurIPS}, 2019, pp. 11\,960--11\,970.

\bibitem{hetsann}
H.~Hong, H.~Guo, Y.~Lin, X.~Yang, Z.~Li, and J.~Ye, ``An attention-based graph
  neural network for heterogeneous structural learning,''
  \emph{arXiv:1912.10832}, 2019.

\bibitem{hgt}
Z.~Hu, Y.~Dong, K.~Wang, and Y.~Sun, ``Heterogeneous graph transformer,'' in
  \emph{Proceedings of The Web Conference 2020}, 2020, pp. 2704--2710.

\bibitem{gat}
P.~Veli{\v{c}}kovi{\'c}, G.~Cucurull, A.~Casanova, A.~Romero, P.~Lio, and
  Y.~Bengio, ``Graph attention networks,'' \emph{arXiv:1710.10903}, 2017.

\bibitem{word2vec-softmax}
T.~Mikolov, I.~Sutskever, K.~Chen, G.~S. Corrado, and J.~Dean, ``Distributed
  representations of words and phrases and their compositionality,'' in
  \emph{Advances in neural information processing systems}, 2013, pp.
  3111--3119.

\bibitem{line}
J.~Tang, M.~Qu, M.~Wang, M.~Zhang, J.~Yan, and Q.~Mei, ``Line: Large-scale
  information network embedding,'' in \emph{WWW}.\hskip 1em plus 0.5em minus
  0.4em\relax International World Wide Web Conferences Steering Committee,
  2015, pp. 1067--1077.

\bibitem{gnn-survey-zhang}
Z.~Zhang, P.~Cui, and W.~Zhu, ``Deep learning on graphs: A survey,''
  \emph{arXiv:1812.04202}, 2018.

\bibitem{bruna}
J.~Bruna, W.~Zaremba, A.~Szlam, and Y.~LeCun, ``Spectral networks and locally
  connected networks on graphs,'' \emph{arXiv:1312.6203}, 2013.

\bibitem{chebnet}
M.~Defferrard, X.~Bresson, and P.~Vandergheynst, ``Convolutional neural
  networks on graphs with fast localized spectral filtering,'' in
  \emph{NeurIPS}, 2016, pp. 3844--3852.

\bibitem{gcn}
T.~N. Kipf and M.~Welling, ``Semi-supervised classification with graph
  convolutional networks,'' \emph{arXiv:1609.02907}, 2016.

\bibitem{graphsage}
W.~Hamilton, Z.~Ying, and J.~Leskovec, ``Inductive representation learning on
  large graphs,'' in \emph{NeurIPS}, 2017, pp. 1024--1034.

\bibitem{transformer}
A.~Vaswani, N.~Shazeer, N.~Parmar, J.~Uszkoreit, L.~Jones, A.~N. Gomez,
  {\L}.~Kaiser, and I.~Polosukhin, ``Attention is all you need,'' in
  \emph{NeurIPS}, 2017, pp. 5998--6008.

\bibitem{elu_nonlinear}
J.~T. Barron, ``Continuously differentiable exponential linear units,''
  \emph{arXiv:1704.07483}, 2017.

\bibitem{spectral_klopotek}
M.~A. Kłopotek, ``Spectral analysis of laplacian of a multidimensional grid
  graph - combinatorial versus normalized and random walk laplacians,'' 2017.

\bibitem{spectral_chung}
F.~R. Chung and F.~C. Graham, \emph{Spectral graph theory}.\hskip 1em plus
  0.5em minus 0.4em\relax American Mathematical Soc., 1997, no.~92.

\bibitem{xavier}
X.~Glorot and Y.~Bengio, ``Understanding the difficulty of training deep
  feedforward neural networks,'' in \emph{Proceedings of the thirteenth
  international conference on artificial intelligence and statistics}, 2010,
  pp. 249--256.

\end{thebibliography}

\begin{IEEEbiography}[{\includegraphics[width=0.9in, height=1.25in,clip,keepaspectratio]{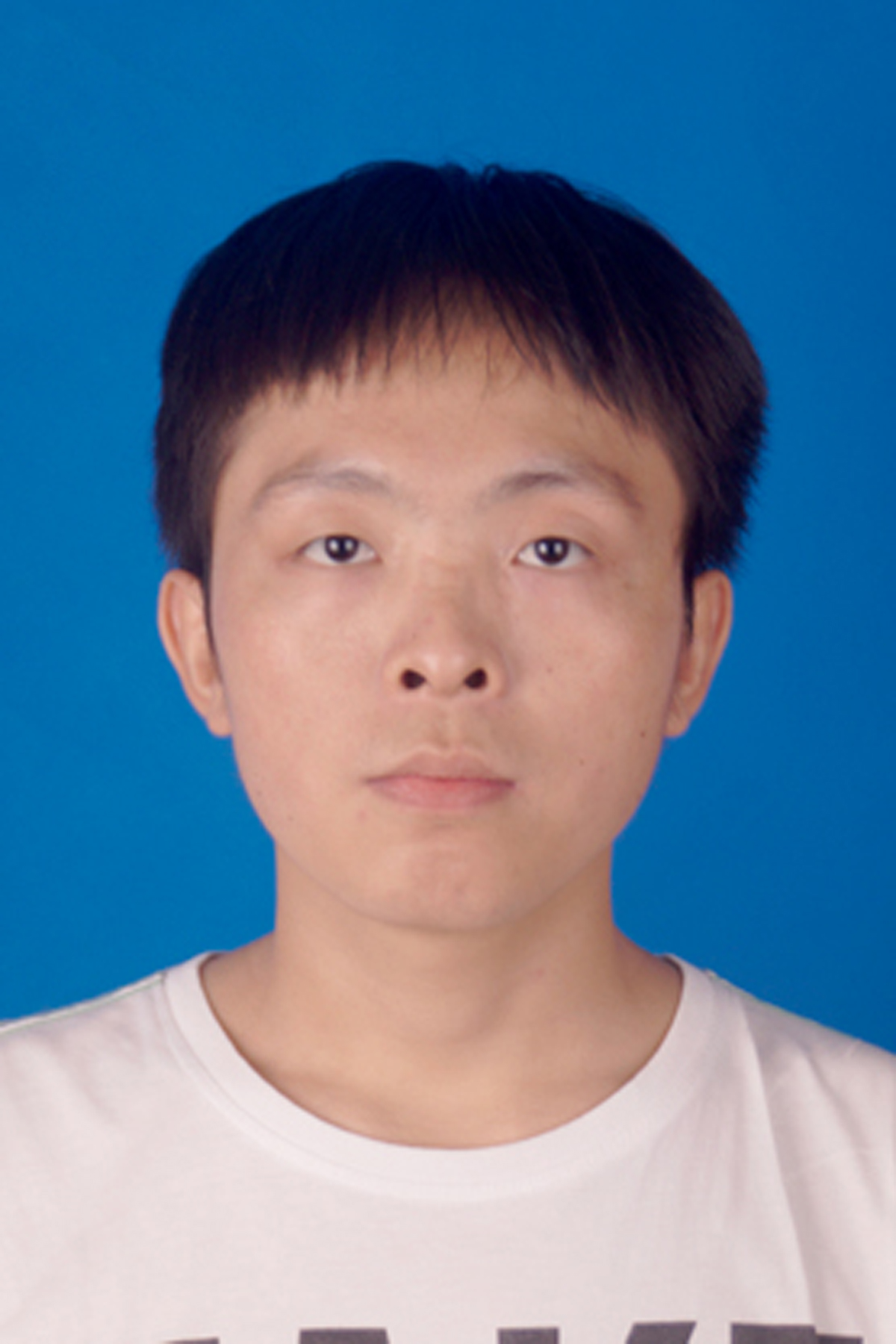}}]{Yaming Yang} received the B.S. degree in Computer Science and Technology from Xidian University, China, in 2015. He is currently a second year Ph.D. student in the School of Computer Science and Technology, Xidian University. His research interests include data mining and machine learning.
\end{IEEEbiography}

\begin{IEEEbiography}[{\includegraphics[width=0.9in, height=1.25in,clip,keepaspectratio]{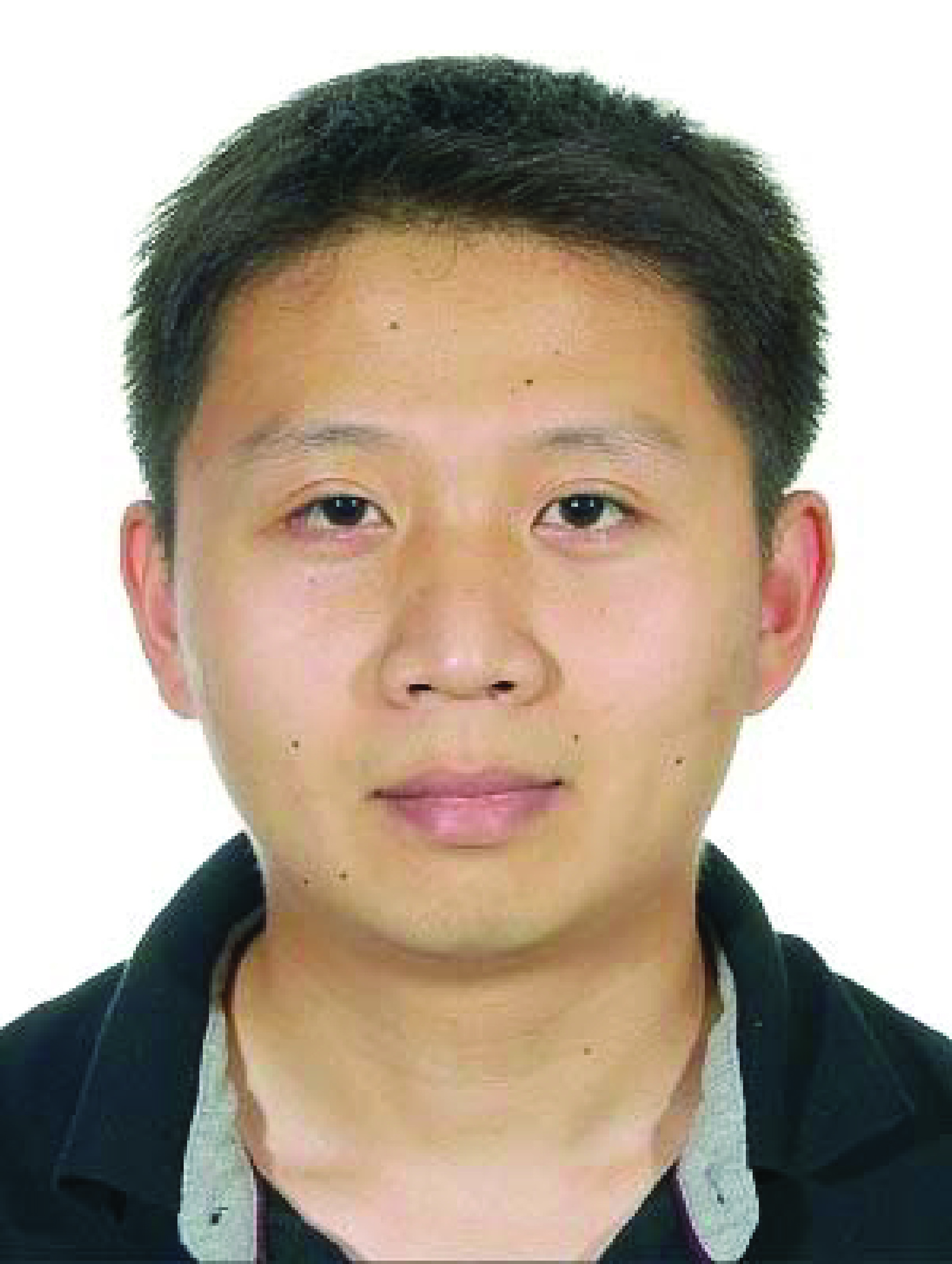}}]{Ziyu Guan} received the B.S. and Ph.D. degrees in Computer Science from Zhejiang University, Hangzhou China, in 2004 and 2010, respectively. He had worked as a research scientist in the University of California at Santa Barbara from 2010 to 2012, and as a professor in the School of Information and Technology of Northwest University, China from 2012 to 2018. He is currently a professor with the School of Computer Science and Technology, Xidian University. His research interests include attributed graph mining and search, machine learning, expertise modeling and retrieval, and recommender systems.
\end{IEEEbiography}

\begin{IEEEbiography}[{\includegraphics[width=0.9in, height=1.25in,clip,keepaspectratio]{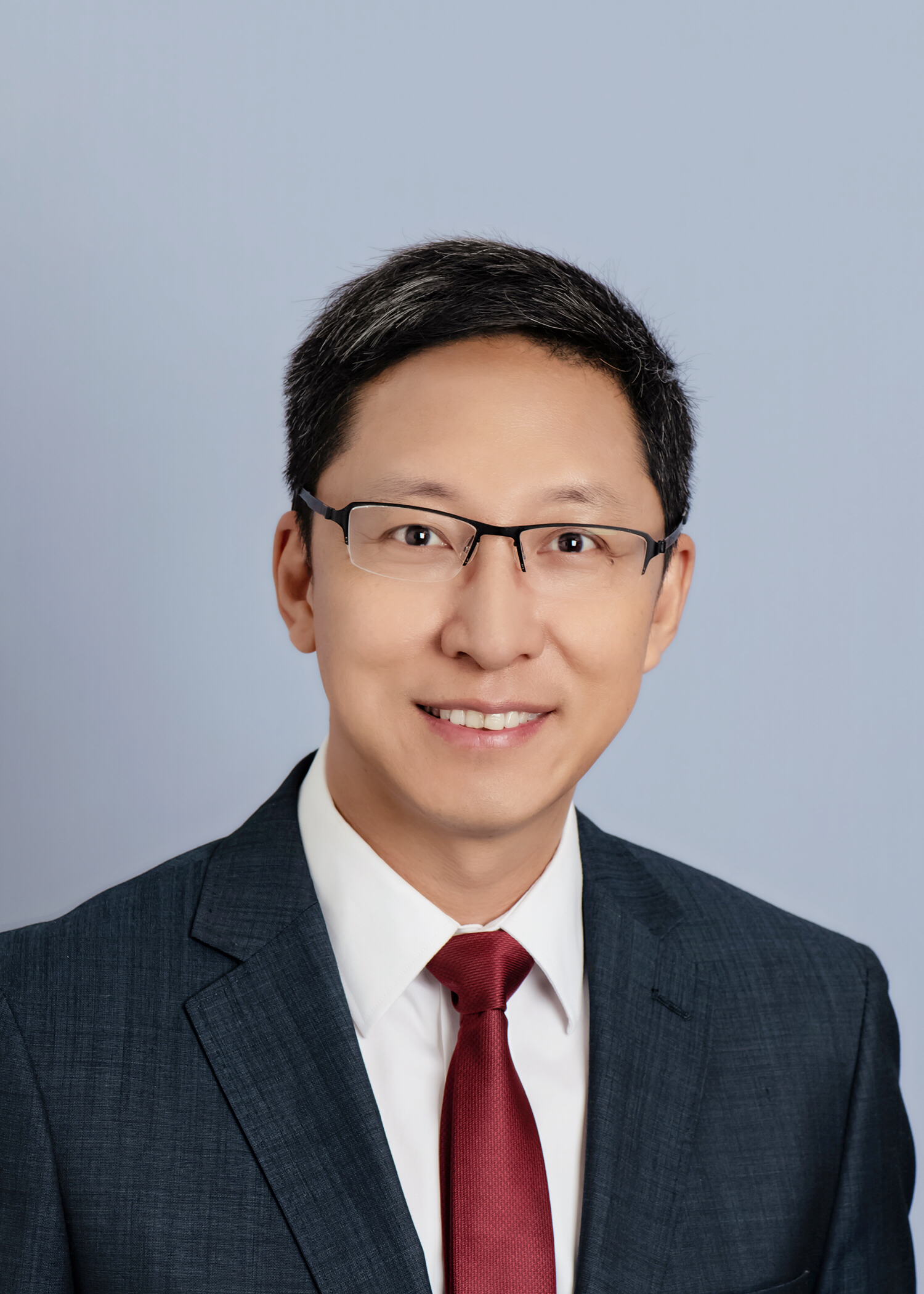}}]{Jianxin Li} received the PhD degree in computer science from the Swinburne University of Technology, Australia, in 2009. He is an Associate Professor in Data Science and the director of the Smart Networks Lab at the School of Information Technology, Deakin University. His research interests include graph database query processing \& optimization, social network analytics \& computing, complex network data representation learning traffic, and personalized online learning analytics.
\end{IEEEbiography}

\begin{IEEEbiography}[{\includegraphics[width=0.9in, height=1.25in,clip,keepaspectratio]{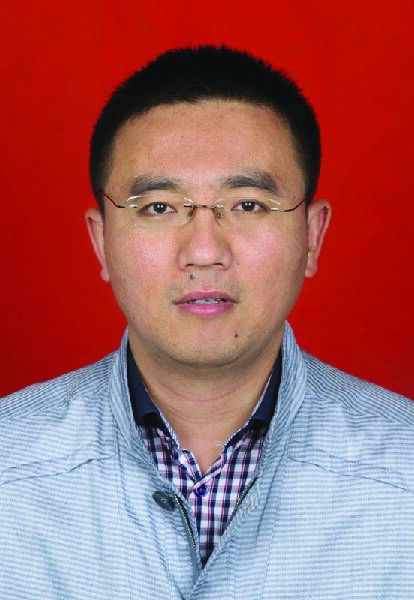}}]{Wei Zhao} received the B.S., M.S. and Ph.D. degrees from Xidian University, Xi’an, China, in 2002, 2005 and 2015, respectively. He is currently a professor in the School of Computer Science and Technology at Xidian University. His research direction is pattern recognition and intelligent systems, with specific interests in attributed graph mining and search, machine learning, signal processing and precision guiding technology.
\end{IEEEbiography}

\begin{IEEEbiography}[{\includegraphics[width=0.9in, height=1.25in,clip,keepaspectratio]{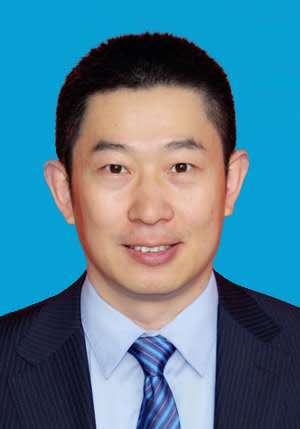}}]{Jiangtao Cui} received the M.S. and Ph.D. degree both in Computer Science from Xidian University, Xian, China in 2001 and 2005 respectively. During 2007 and 2008, he has been with the Data and Knowledge Engineering group working on high-dimensional indexing for large scale image retrieval, in the University of Queensland, Australia. He is currently a professor in the School of Computer Science and Technology, Xidian University. His current research interests include data and knowledge engineering, data security, and high-dimensional indexing. 
\end{IEEEbiography}

\begin{IEEEbiography}[{\includegraphics[width=0.9in, height=1.25in,clip,keepaspectratio]{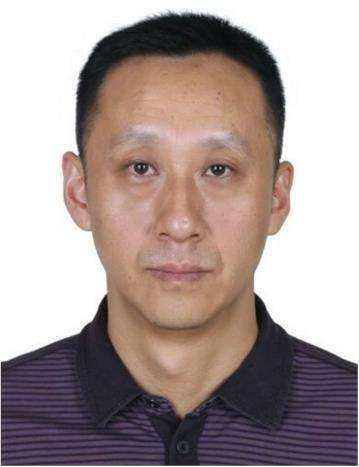}}]{Quan Wang} received his B.S., M.S. and Ph.D. degrees in Computer Science from Xidian University, Xi'an, China, in 1992, 1997 and 2008, respectively. He is now a professor at the School of Computer Science and Technology and the vice president of Xidian University. His research interests include hardware security, embedded system, wireless networks, and 3-D printing. He is a member of IEEE, a distinguished member of China Computer Federation (CCF) and Councilor of ACM Xi'an Section.
\end{IEEEbiography}

\end{document}